\pgfplotsset{compat=newest}
\newcommand{\bb}[1]{{\mathbb{#1}}}
\newcommand{\norm}[1]{{\lVert {#1} \rVert}}
\newcommand{\diff}{\mathrm{d}}
\def\eqref#1{Eq.(\ref{#1})}
\def\1{\bm{1}}
\def\rvx{{\mathbf{x}}}
\def\rvy{{\mathbf{y}}}
\def\rvz{{\mathbf{z}}}
\DeclareMathAlphabet{\mathsfit}{\encodingdefault}{\sfdefault}{m}{sl}
\SetMathAlphabet{\mathsfit}{bold}{\encodingdefault}{\sfdefault}{bx}{n}
\def\gD{{\mathcal{D}}}
\def\gL{{\mathcal{L}}}
\def\gM{{\mathcal{M}}}
\def\gN{{\mathcal{N}}}
\def\gO{{\mathcal{O}}}
\def\gT{{\mathcal{T}}}
\def\gX{{\mathcal{X}}}
\newcommand{\R}{\mathbb{R}}
\DeclareMathOperator*{\argmax}{arg\,max}
\DeclareMathOperator*{\plim}{plim}
\theoremstyle{plain}
\newtheorem{theorem}{Theorem}[section]
\newtheorem{proposition}[theorem]{Proposition}
\newtheorem{corollary}[theorem]{Corollary}
\theoremstyle{definition}
\newtheorem{definition}[theorem]{Definition}
\newtheorem{assumption}[theorem]{Assumption}
\theoremstyle{remark}
\newtheorem{remark}[theorem]{Remark}
\newtheorem{claim}[theorem]{Claim}
\newcommand{\methodshort}{LFBO}
\icmltitlerunning{A General Recipe for Likelihood-free Bayesian Optimization}
\begin{document}

\twocolumn[
\icmltitle{A General Recipe for Likelihood-free Bayesian Optimization}

\icmlsetsymbol{equal}{*}

\begin{icmlauthorlist}
\icmlauthor{Jiaming Song}{equal,nv}
\icmlauthor{Lantao Yu}{equal,st}
\icmlauthor{Willie Neiswanger}{st}
\icmlauthor{Stefano Ermon}{st}
\end{icmlauthorlist}

\icmlaffiliation{st}{Stanford University}
\icmlaffiliation{nv}{NVIDIA (Work done while at Stanford).}

\icmlcorrespondingauthor{Jiaming Song}{jiamings@nvidia.com}

\icmlkeywords{Machine Learning, ICML}

\vskip 0.3in
]

\printAffiliationsAndNotice{
    \vspace{-4pt}\\
    \textbf{Project website}: \url{https://lfbo-ml.github.io/} \\
    \textbf{Code}: \url{https://github.com/lfbo-ml/lfbo}
    \vspace{-2pt}\\ \; \\
    \icmlEqualContribution
} %

\begin{abstract}
The acquisition function, a critical component in Bayesian optimization (BO), can often be written as the expectation of a utility function under a surrogate model. However, to ensure that acquisition functions are tractable to optimize, restrictions must be placed on the surrogate model and utility function. To extend BO to a broader class of models and utilities, we propose likelihood-free BO (LFBO), an approach based on likelihood-free inference. LFBO directly models the acquisition function without having to separately perform inference with a probabilistic surrogate model. 
We show that computing the acquisition function in LFBO can be reduced to optimizing a weighted classification problem, where the weights correspond to the utility being chosen.
By choosing the utility function for expected improvement (EI), LFBO outperforms various state-of-the-art black-box optimization methods on several real-world optimization problems.
LFBO can also effectively
leverage composite structures of the objective function, which further improves its regret by several orders of magnitude.

\end{abstract}

\section{Introduction}

Bayesian optimization (BO) is a framework for global optimization of black-box functions, whose evaluations are expensive. %
Originated from applications on engineering design~\citep{kushner1962versatile,kushner1964new,movckus1975bayesian}, BO has seen successes in various domains, including automated machine learning~\citep{bergstra2011algorithms,snoek2012practical,swersky2013multi}, simulation optimization~\citep{pearce2017bayesian,wang2020nonparametric}, drug discovery~\citep{griffiths2020constrained}, graphics~\citep{brochu2007active,koyama2017sequential}, robotics~\citep{calandra2016bayesian}, and battery charging protocols design~\citep{attia2020closed}. %

A cornerstone of BO methods is the acquisition function that determines %
which candidates to propose for function evaluations. Many acquisition functions can be written in the form of the expectation of a utility function over a surrogate model~\citep{wilson2018maximizing,garnett_bayesoptbook_2022}. 
To evaluate and optimize these acquisition functions effectively, the surrogate model is often chosen to produce tractable probability estimates (\textit{e.g.}, Gaussian processes~\citep{rasmussen2003gaussian}), and the utility functions are often chosen such that the expectations can be computed analytically, such as Probability of Improvement (PI,~\citep{kushner1964new}).

However, the requirements of tractability and efficiency often exclude popular families of surrogate models and utility functions relevant to the optimization problem.  
For instance, implicit models defined via simulators~\citep{diggle1984monte} may fit the data-generating process better but do not have tractable probability estimates. Meanwhile, objectives that are defined over many correlated outcomes are often partially known (\textit{i.e.}, in a grey-box BO setting~\citep{astudillo2022thinking,maddox2021bayesian}), yet utilities defined for these objective functions may not have tractable integrals even for Gaussian process surrogate models. 
While Monte Carlo methods can be employed here~\citep{astudillo2019bayesian,kleinegesse2019efficient}, they scale poorly when many samples are needed for low variance estimates. %

Inspired by how likelihood-free density ratio estimation extends variational inference to implicit models~\citep{sugiyama2012density,tran2017hierarchical}, we describe a general approach to obtain ``likelihood-free'' acquisition functions that extend BO to a broader class of models and utilities.
We develop a likelihood-free method that directly estimates the ratio between any pair of non-negative measures; this includes any acquisition function with a non-negative utility function, and allows us to directly model it without having to separately perform inference with a probabilistic surrogate model. Our likelihood-free BO (\methodshort{}) approach can be reduced to a weighted %
classification problem, which is easy to implement for general utility functions.

We evaluate our \methodshort{} method empirically on a number of synthetic and real optimization problems. %
On several real-world benchmarks, LFBO %
outperforms various state-of-the-art methods in black-box optimization. On two neural network tuning tasks, we show that LFBO can further be improved with other novel utility functions. On an environmental modeling problem~\citep{astudillo2019bayesian}, LFBO can leverage the special composite structure of the objective function and improve its optimization efficiency.

\begin{figure}
    \centering
    \includegraphics[width=0.49\textwidth]{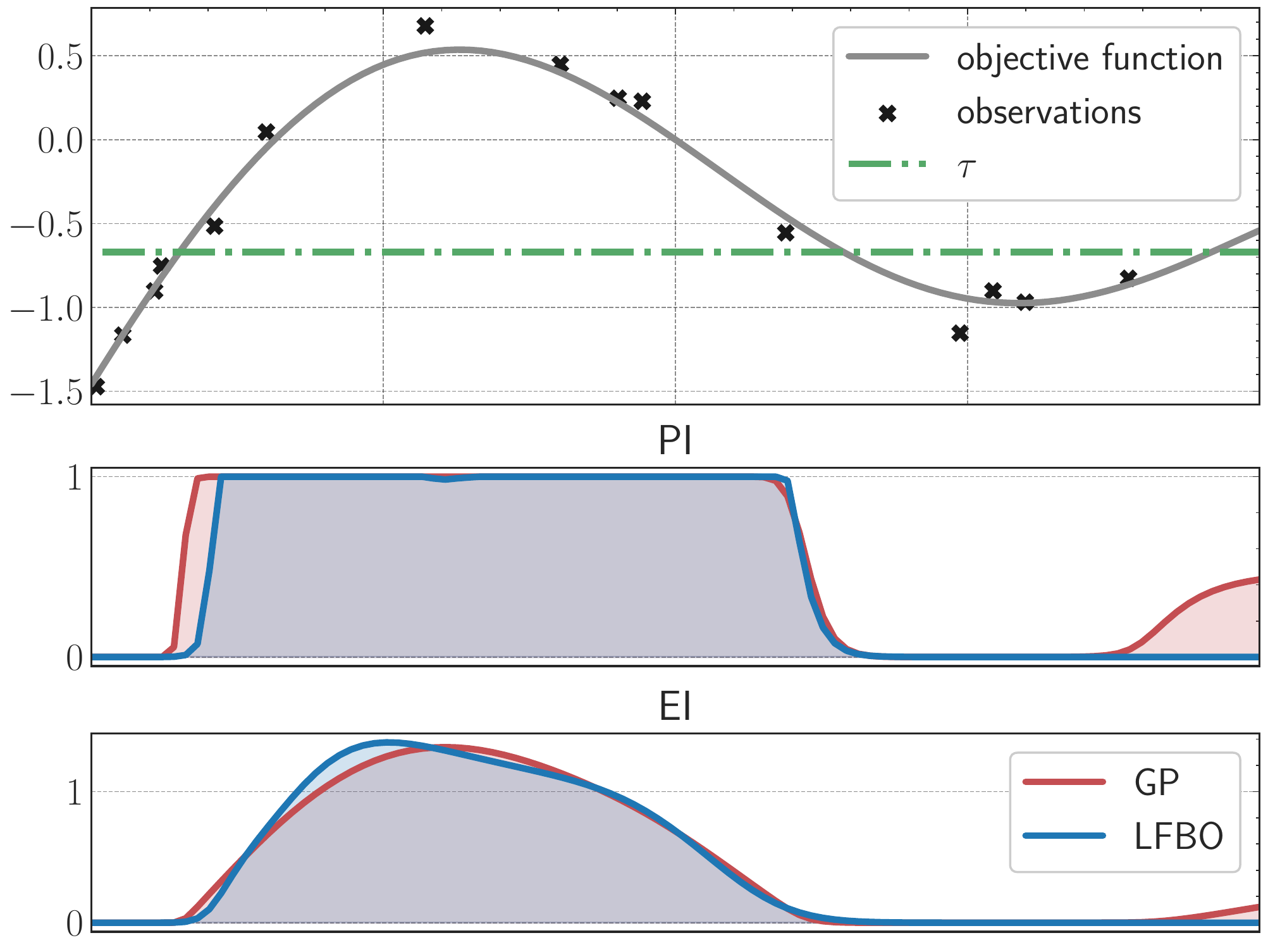}
    \caption{BO over a synthetic function $g(x) = - \sin(3x) - x^2 + 0.6x$ with noise $\gN(0, 0.01)$. (\textit{Top}) the objective function $g(x)$ with $n = 15$ noisy observations and the threshold $\tau$. (\textit{Bottom}) the evaluated acquisition function values with Gaussian Process (GP, \textit{red}) and our likelihood-free BO (LFBO, \textit{blue}) over PI and EI utility functions. Our LFBO acquisition function does not separately perform probabilistic inference over a surrogate model and can adapt to any non-negative utility function. }
    \label{fig:demo}
\end{figure}

\section{Background}
Bayesian optimization (BO) aims to find a strategy that effectively maximizes a black-box function $g: \gX \to \bb{R}$, given available information regarding $g$ as a set of $n$ observations $\gD_n = \{(\rvx_i, y_i)\}_{i=1}^{n}$; it is commonly assumed that $y_i = g(\rvx_i) + \epsilon$ is observed with noise $\epsilon$. In sequential decision making settings, at each iteration $i$, BO methods select the next query $\rvx_i$ via maximizing an acquisition function $\gL: \gX \to \R$, which utilizes a surrogate model $\gM$. %

$\gM$ may provide a probabilistic interpretation of $g$, which can then be used to 
evaluate the acquisition function at any point $\rvx \in \gX$. After updating the model parameters based on $\gD_n$, a belief over $g(\rvx)$ can be formed as $p(y | \rvx, \gD_n)$. 
The Gaussian process (GP,~\citet{rasmussen2003gaussian}) is a popular surrogate model due to its posterior being analytically tractable; however, the basic form of GP requires $\gO(n^3)$ computational complexity to perform posterior inference, making it difficult to scale with large observation sets. Although researchers have proposed sparse GPs with approximations for improved scalability, the inference time still scales with the number of data points (e.g., $\mathcal{O}(n)$ in \citep{titsias2009variational}).

\newcommand{\PI}{\mathrm{PI}}
\newcommand{\EI}{\mathrm{EI}}
\newcommand{\DR}{\mathrm{DR}}

\paragraph{Acquisition functions based on expected utility.}
Many acquisition functions at an input location $\rvx$ are defined as an expected utility (EU) over the posterior belief $p(y | \rvx, \gD_n)$:
\begin{align}
    \gL^{(u)}(\rvx; \gD_n, \tau) & = \bb{E}_{y \sim p(y | \rvx, \gD_n)}[u(y; \tau)] \\
    & = \int u(y; \tau) p(y | \rvx, \gD_n) \diff y, \label{eq:myopic-aqf-def}
\end{align}
where $u(y; \tau)$ is a chosen utility function with hyperparameter $\tau \in \gT$ ($\gT$ is the set of allowed values) %
that specifies the utility of observing $y$ at $\rvx$ and controls the exploration-exploitation trade-off~\citep{wilson2018maximizing,garnett_bayesoptbook_2022}.
The most common examples are Probability of Improvement (PI,~\citet{kushner1964new}) whose utility indicates whether $y$ exceeds $\tau$: 
\begin{align}
    u^{\PI}(y; \tau) := \bb{I}(y - \tau > 0),
\end{align}
where $\bb{I}$ is the binary indicator function;
and Expected Improvement (EI,~\citet{mockus1978application}) whose utility indicates how much $y$ exceeds the threshold $\tau$:
\begin{align}
    u^{\EI}(y; \tau) := \max(y - \tau, 0).
\end{align}
Other examples include Entropy Search (ES,~\citet{hennig2012entropy,hernandez2014predictive}) and Knowledge Gradient~\citep{frazier2008knowledge,scott2011correlated}, whose utilities are more complicated expectations themselves.  %

\paragraph{Acquisition functions based on density ratios.}
While most methods used in BO model the distribution of outcomes of the black-box function $g$, 
some methods do not use an explicit model in the form of $p(y | \rvx, \gD_n)$, and instead model the acquisition function directly via density ratios over $\rvx$. 
Density ratio (DR) acquisition functions~\citep{bergstra2011algorithms} expresses the acquisition function with the ratio of two model densities\footnote{We overload the term ``density'' to include ``probability'' for discrete measures; we also consider maximization instead of minimization of objectives.} over $\rvx$, $p(\rvx | y > \tau, \gD_n)$ and $p(\rvx | y \leq \tau, \gD_n)$, that models the densities of $\rvx$ conditioned on $y$ being above or below the threshold $\tau$, respectively\footnote{Throughout the paper, we use the notation $\gL$ to indicate acquisition functions that explicitly use tractable forms of $p(y | \rvx, \gD_n)$, and the notation $L$ to indicate acquisition functions that do not (\textit{e.g.}, acquisition functions based on density ratios).}: 
\begin{align}
    & L^{\text{DR}}(\rvx; \gD_n, \tau)   = \frac{p(\rvx | y > \tau, \gD_n)}{p(\rvx | \gD_n)} \label{eq:dr-acqf-def} \\
    = & \ \frac{p(\rvx | y > \tau, \gD_n)}{p(\rvx | y > \tau, \gD_n) \gamma_{\gD_n} + p(\rvx | y \leq \tau, {\gD_n}) (1 - \gamma_{\gD_n})}, \nonumber
\end{align}
where $\gamma_{\gD_n} := p(y > \tau | {\gD_n})$. Intuitively, when the ratio in \autoref{eq:dr-acqf-def} is high for a certain $\rvx$, it is believed to produce favorable outcomes (larger than $\tau$) with a high probability; this makes density ratios being successfully used in hyperparameter tuning and neural architecture search~\citep{bergstra2013making}.

To estimate this density ratio, \citeauthor{bergstra2011algorithms} estimate the numerator and denominator with two seprate Tree Parzen Estimators (TPE), whereas
Bayesian optimization by density-ratio estimation (BORE,~\citet{tiao2021bore}) estimate this density ratio directly with likelihood-free inference~\citep{sugiyama2012density}.
  \citeauthor{bergstra2011algorithms} and \citeauthor{tiao2021bore} further argue that the DR acquisition function is equivalent to the EI acquisition function up to some constant factor; thus DR is limited to one particular type of utility function (which is supposedly EI). In Section~\ref{sec:method}, however, we prove that DR is equivalent to PI instead of EI, and then propose a solution to generalize DR-based BO to other acquisition functions, including EI.

\section{Likelihood-free Bayesian Optimization}
\label{sec:method}

\begin{figure*}
    \centering
    \includegraphics[width=\textwidth]{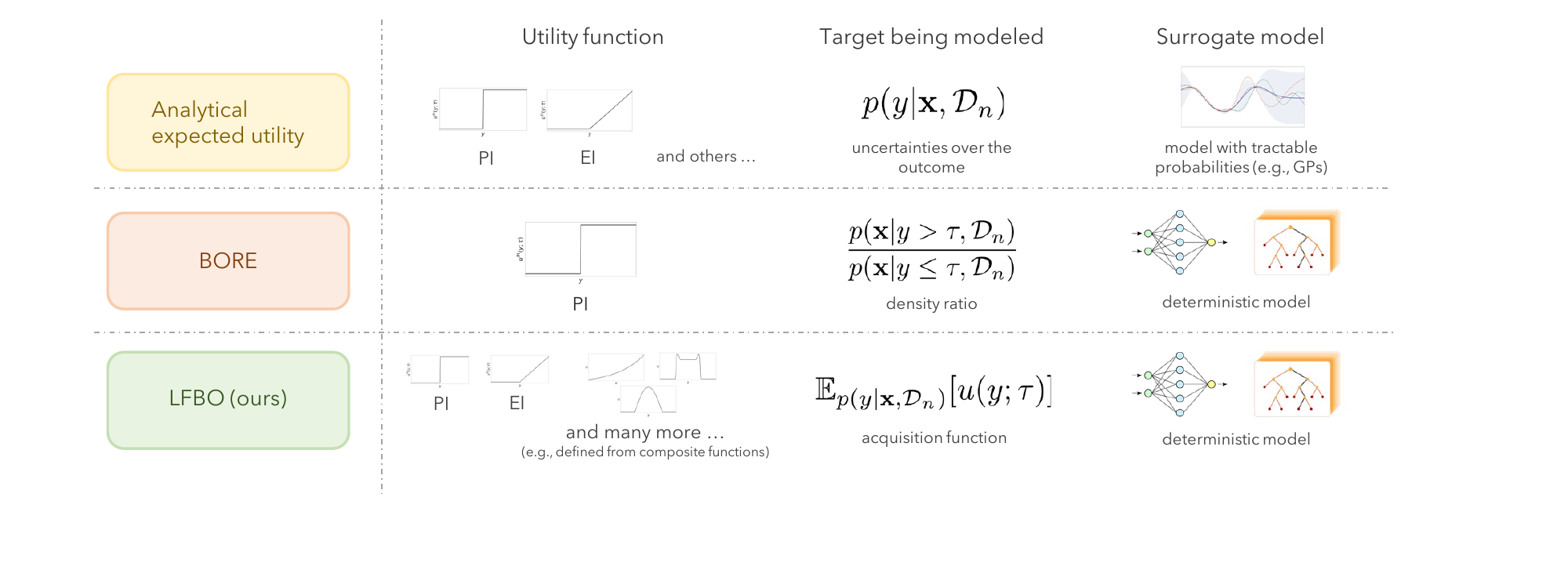}
    \caption{Overview of several types of acquisition functions for BO. (\textit{Top}) Closed-form acquisition functions in the form of expected utility often require certain utility functions (\textit{e.g.}, PI and EI) and certain surrogate models (\textit{e.g.}, GPs). (\textit{Middle}) BORE~\citep{tiao2021bore} uses simple, deterministic models (\textit{e.g.}, neural networks, decision trees) to model a density ratio with likelihood-free inference, but this limits its utility function to the one associated with PI. (\textit{Bottom}) Our likelihood-free BO approach extends BORE to any non-negative utility function, including the one associated with EI and others whose expectations may not have a closed-form solution (\textit{e.g.}, composite function EI in~\citep{astudillo2019bayesian}).
    }
    \label{fig:overview}
\end{figure*}

As discussed earlier, the acquisition function is constructed by two components, the surrogate model and the utility function\footnote{In the case of DR, the utility function is implicitly chosen.}, over which restrictions are often placed to ensure that the acquisition function is tractable and efficient to compute.
To allow tractable inference, EU acquisition functions often choose models such that $p(y | \rvx, \gD)$ is explicitly defined (\textit{e.g.}, Gaussian), which excludes other popular models such as implicit, simulator-based models~\citep{diggle1984monte} that do not have tractable likelihood estimates. While DR acquisition functions do not pose such limitations, they are limited in terms of the utility function and cannot effectively leverage knowledge of the structure %
of function $g(\rvx)$ in a grey-box BO setting~\citep{astudillo2022thinking}.

Inspired by how likelihood-free variational inference (VI) extends VI to implicit models, %
we develop ``likelihood-free'' acquisition functions that extend BO to more general models and utility functions. First, we introduce a formal definition for when a likelihood-free acquisition function admits a certain utility function (Section \ref{sec:consistency}).
We then develop likelihood-free acquisition functions that can admit any non-negative utility function (Section~\ref{sec:lfbo}) and discuss a special case that amounts to weighted classification (Section~\ref{sec:practical}). Finally, we discuss the representation capability of likelihood-free acquisition functions (Section~\ref{sec:capacity}).

\subsection{Equivalence between Acquisition Functions}
\label{sec:consistency}

Before we establish our likelihood-free BO method, we introduce an equivalence relationship among acquisition functions that describes if two acquisition functions would behave identically under infinite observations, even with different types of surrogate models.
This allows us to replace the expected utility acquisition function with an equivalent likelihood-free approach in practical scenarios, where the difference in exploration-exploitation trade-offs would only occur from the difference in models fitted from finite queries.

\begin{definition}[Equivalence between acquisition functions] 
Assume that the statistical models that we consider are consistent (satisfying \autoref{assu:consistency-full} in \autoref{app:proofs}).
For all $n \in \bb{N}$, let $\gD^{(1)}_n$ and $\gD^{(2)}_n$ denote the random variables representing $n$ non-\textit{i.i.d.} queries from two valid sequential decision making processes for the same black-box function (satisfying \autoref{def:valid-sdp} in \autoref{app:proofs}). Let $\mathrm{Ac}^{(1)}$ and $\mathrm{Ac}^{(2)}$ denote two acquisition functions with the same search space $\gX$ and hyperparameter space $\gT$. %
We say that ``$\mathrm{Ac}^{(1)}$ is equivalent to $\mathrm{Ac}^{(2)}$'' if $\forall \rvx \in \gX, \tau \in \gT$, a positive constant factor $\alpha$, and a constant $\beta$, the following is true: %
\begin{align*}
   \plim_{n \to \infty} \mathrm{Ac}^{(1)}(\rvx;  \gD^{(1)}_n, \tau) = \alpha \plim_{n \to \infty} \mathrm{Ac}^{(2)}(\rvx; \gD^{(2)}_n, \tau) + \beta. %
\end{align*}
\end{definition}
We note that for a sequential decision process to be valid (\autoref{def:valid-sdp}), it needs to place non-zero probabilities to every point in the search space. Since any $\epsilon$-greedy strategy with $\epsilon > 0$ would satisfy this, our condition here is reasonable (as we can apply $\epsilon$-greedy strategy to any decision process based on acquisition functions). Moreover, a similar consistency assumption is often made for GP-based methods~\citep{williams2006gaussian}. %

With this definition, the claim that DR acquisition functions being equivalent to the EI acquisition function \citep{bergstra2011algorithms,tiao2021bore} may be interpreted as ``$L^{\DR}$ is equivalent to $\gL^{\EI}$''. Unfortunately, this interpretation is inaccurate according to our definitions. In \autoref{prop:consistency-pi}, we introduce an equivalence result between DR and PI. This echos the claim made in \citet{garnett_bayesoptbook_2022}. %

\begin{restatable}{proposition}{consistencypi}\label{prop:consistency-pi}
$L^{\DR}$ is equivalent to $\gL^{\PI}$.
\end{restatable}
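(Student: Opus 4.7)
My plan is to show that $L^{\DR}$ and $\gL^{\PI}$ differ only by a multiplicative factor that is constant in $\rvx$, and then invoke the consistency assumption to conclude that this factor converges to a deterministic positive constant as $n \to \infty$.

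\textbf{Step 1 (Bayes-rule rewriting).} First I would apply Bayes' rule to the numerator of \eqref{eq:dr-acqf-def}. Writing $p(\rvx \mid y > \tau, \gD_n) = p(y > \tau \mid \rvx, \gD_n)\, p(\rvx \mid \gD_n) / p(y > \tau \mid \gD_n)$ and cancelling $p(\rvx \mid \gD_n)$ yields
\begin{align*}
L^{\DR}(\rvx; \gD_n, \tau) \;=\; \frac{p(y > \tau \mid \rvx, \gD_n)}{p(y > \tau \mid \gD_n)} \;=\; \frac{1}{\gamma_{\gD_n}}\, \gL^{\PI}(\rvx; \gD_n, \tau),
\end{align*}
where the second equality uses that $\gL^{\PI}(\rvx; \gD_n, \tau) = \E_{y\sim p(y \mid \rvx,\gD_n)}[\mathbb{I}(y>\tau)] = p(y>\tau \mid \rvx, \gD_n)$. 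So pointwise in $\rvx$, $L^{\DR}$ equals $\gL^{\PI}$ scaled by the $\rvx$-independent factor $1/\gamma_{\gD_n}$.

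\textbf{Step 2 (Limits via consistency).} Next I would take $n \to \infty$ along the two possibly-different valid sequential decision processes $\gD_n^{(1)}$ (used by $L^{\DR}$) and $\gD_n^{(2)}$ (used by $\gL^{\PI}$). Validity guarantees that each process queries every region of $\gX$ with positive probability, and the consistency assumption (\autoref{assu:consistency-full}) implies the posterior predictive converges in probability to the true conditional for each process:
\begin{align*}
\plim_{n\to\infty} p(y > \tau \mid \rvx, \gD_n^{(1)}) \;=\; \plim_{n\to\infty} p(y > \tau \mid \rvx, \gD_n^{(2)}) \;=\; p^*(y > \tau \mid \rvx).
\end{align*}
Thus $\plim_{n\to\infty}\gL^{\PI}(\rvx;\gD_n^{(1)},\tau) = \plim_{n\to\infty}\gL^{\PI}(\rvx;\gD_n^{(2)},\tau)$, i.e.\ the expected-utility limit is insensitive to which valid process was used.

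\textbf{Step 3 (The normalizing factor is a positive constant).} I would then argue that $\gamma_{\gD_n^{(1)}}$ converges in probability to a deterministic positive constant $\gamma_\infty^{(1)}$. Writing $\gamma_{\gD_n^{(1)}} = \int p(y>\tau\mid \rvx, \gD_n^{(1)})\, p(\rvx\mid \gD_n^{(1)})\,d\rvx$, by Step 2 the conditional factor converges to $p^*(y>\tau\mid \rvx)$, and by validity the query marginal $p(\rvx\mid\gD_n^{(1)})$ converges to the limiting sampling density $q^{(1)}(\rvx)$ of that decision process. Hence $\gamma_{\gD_n^{(1)}} \to \gamma_\infty^{(1)} := \int p^*(y>\tau\mid\rvx)\,q^{(1)}(\rvx)\,d\rvx$, which is strictly positive provided $\tau$ lies below $\sup g$ (the non-degenerate case where BO with this threshold is meaningful). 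Combining with Step 1,
\begin{align*}
\plim_{n\to\infty} L^{\DR}(\rvx;\gD_n^{(1)},\tau) \;=\; \frac{1}{\gamma_\infty^{(1)}}\, \plim_{n\to\infty}\gL^{\PI}(\rvx;\gD_n^{(2)},\tau),
\end{align*}
so the equivalence holds with $\alpha = 1/\gamma_\infty^{(1)} > 0$ and $\beta = 0$.

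The main obstacle is Step 3: one must carefully justify both that $p(\rvx\mid\gD_n^{(1)})$ has a well-defined limit under the DR surrogate (which in practice is an empirical/Parzen-type density over past queries) and that the integrand is well-behaved enough to exchange the limit with the integral. This ought to follow from the consistency assumption plus standard dominated-convergence arguments, but it is the one place where the informal ``density over $\rvx$'' of the DR construction needs to be pinned down. The rest of the proof is essentially the one-line Bayes identity in Step 1, which is what makes the equivalence land at PI rather than EI.
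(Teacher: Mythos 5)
Your proof is correct and follows the same essential route as the paper's: rewrite the density ratio via Bayes' rule so that it becomes $p(y>\tau\mid\rvx)$ up to an $\rvx$-independent factor, then invoke the consistency assumption and Slutsky's theorem. The one difference is where Bayes' rule is applied. You apply it at finite $n$ (Step 1), which implicitly assumes the DR model's two fitted densities $p(\rvx\mid y>\tau,\gD_n)$ and $p(\rvx\mid\gD_n)$ are exactly Bayes-consistent with a single posterior $p(y>\tau\mid\rvx,\gD_n)$ — something that need not hold for, say, two independently fitted Parzen estimators; this is precisely the worry you raise at the end. The paper sidesteps it by first passing each of $p(\rvx\mid y>\tau,\gD_n)$ and $p(\rvx\mid\gD_n)$ to its limit under \autoref{assu:consistency-full} (via Slutsky), and only then applying Bayes' rule to the limiting \emph{true} distributions $p(\rvx\mid y>\tau)$, $p(\rvx)$, $p(y>\tau)$, where the identity is exact. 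Your Step 3 concern about the limit of $\gamma_{\gD_n}$ likewise dissolves: the paper simply multiplies by the deterministic constant $\gamma := p(y>\tau)$ of the limiting distribution rather than tracking the convergence of the empirical quantile weight. Reordering your argument this way closes the gap you identified without any dominated-convergence machinery.
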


\begin{proof}
In \Cref{app:dr-proofs}.
\end{proof}
The above statement indicates that the density ratio acquisition functions are equivalent to probability of improvement under the same $\tau$, which justifies its practical effectiveness.
However, as PI and EI are clearly different acquisition functions, DR cannot be equivalent to EI\footnote{We note that in \citet{tiao2021bore}, the ``equivalence'' between DR and EI are established only for specific surrogate models.}. We include a formal statement in \autoref{coro:inconsistency-ei} and its proof in Appendix~\ref{app:proofs}.  

Another intuitive argument for the non-equivalence between DR and EI is through the weights of the observed $y$ in the acquisition function: for a fixed threshold $\tau$, DR would treat all $(\rvx, y)$ pairs above it with equal importance (just like how PI does), whereas EI also considers how much $y$ is greater than $\tau$ and places more importance when $(y - \tau)$ is higher.

\subsection{A General Recipe for Likelihood-free BO}
\label{sec:lfbo}
Given %
the above results, it is natural to ask if there exists another likelihood-free acquisition function that is actually equivalent to EI. Here, we provide a general approach to obtain likelihood-free acquisition functions that are equivalent to acquisition functions based on any non-negative expected utility function, which naturally includes both EI and PI.

Our approach is inspired by density ratio estimation via variational $f$-divergence estimation~\citep{nguyen2008estimating} but generalizes it beyond the ratio between densities. The following lemma provides a variational representation for the expected utility at any point $\rvx$, provided with samples from some $p(y | \rvx)$ (that may or may not depend on data). This enables us to replace integration (that are potentially intractable for complex $p(y | \rvx)$ and $u$) with a variational objective (that can be performed on samples). 
\begin{restatable}[Variational representation of an integral]{lemma}{variationalintegral}\label{lemma:variational-integral}
For all non-negative utility functions $u: \R \times \gT \to [0, \infty)$, and for all strictly convex functions $f: [0, \infty) \to \R$ with third order derivatives,  
\begin{align*}
\bb{E}_{p(y | \rvx)}[u(y; \tau)] & = \\
\argmax_{s \in [0, \infty)} \ & \bb{E}_{p(y | \rvx)}[u(y; \tau) f'(s)] - f^\star(f'(s)),
\end{align*}
where $f^\star$ is the convex conjugate of $f$, and the maximization is performed over the $s \in [0, \infty)$ (the domain of $f$). %
\end{restatable}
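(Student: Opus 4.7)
The plan is to reduce the statement to a one-dimensional optimization problem in $s$ whose maximizer can be identified by elementary calculus via the standard Fenchel-conjugate identity. Let $c \defeq \bb{E}_{p(y|\rvx)}[u(y;\tau)]$. Since $u$ is non-negative, $c \in [0,\infty)$, so $c$ lies in the feasible set over which we are maximizing. Because the expectation depends on $s$ only through the scalar quantities $f'(s)$ and $f^\star(f'(s))$, we can pull $c$ out of the expectation and rewrite the objective as
\begin{equation*}
\Phi(s) \defeq c\,f'(s) - f^\star(f'(s)), \qquad s \in [0,\infty).
\end{equation*}
The claim is then simply that $\argmax_{s \geq 0} \Phi(s) = c$.

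Next I would invoke the standard Legendre/Fenchel identity: for strictly convex differentiable $f$ on $[0,\infty)$, $f^\star$ is differentiable on the image of $f'$ and $(f^\star)'(f'(s)) = s$. Differentiating $\Phi$ and applying this identity gives
\begin{equation*}
\Phi'(s) = c\,f''(s) - (f^\star)'(f'(s))\,f''(s) = f''(s)\,(c - s).
\end{equation*}
Strict convexity of $f$ (which follows from existence of third derivatives together with the strict convexity hypothesis, via $f'' > 0$ on the interior) ensures $f''(s) > 0$, so $\Phi'(s) > 0$ for $s < c$ and $\Phi'(s) < 0$ for $s > c$. Hence $\Phi$ is strictly increasing on $[0,c]$ and strictly decreasing on $[c,\infty)$, making $s = c$ the unique global maximizer on $[0,\infty)$.

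The only subtle point is housekeeping around the domain and the conjugate identity. I would briefly justify why $f^\star$ is differentiable at every point of the form $f'(s)$ for $s$ in the interior of $[0,\infty)$ (standard consequence of strict convexity of $f$, i.e. $f'$ is a homeomorphism onto its image and its inverse is $(f^\star)'$), and why $c$ indeed lies in $[0,\infty)$ so the argmax is attained there rather than on the boundary in an artificial way. Edge cases $c = 0$ are handled identically, since $\Phi$ is then non-increasing on $[0,\infty)$ with maximum at $s = 0 = c$. The third-derivative hypothesis is not essential to the argument but lets us freely differentiate $\Phi$; I would mention that only $f \in C^2$ with $f'' > 0$ is actually used. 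The main potential obstacle is merely making the Fenchel identity rigorous under the stated smoothness; everything else is a one-line calculus argument.
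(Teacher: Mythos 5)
Your proof is correct and follows essentially the same route as the paper: both reduce the objective to $\Phi(s) = c f'(s) - f^\star(f'(s))$ with $c = \bb{E}_{p(y|\rvx)}[u(y;\tau)]$, use the conjugate identity $f^\star(f'(s)) = s f'(s) - f(s)$ to obtain $\Phi'(s) = f''(s)(c - s)$, and conclude that $s = c$ is the unique maximizer by strict convexity. If anything, your sign analysis of $\Phi'$ on either side of $c$ gives global optimality slightly more directly than the paper's second-derivative check at the critical point, and your remark that only $f \in C^2$ with $f'' > 0$ is needed is accurate.
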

\begin{proof}
In \Cref{app:lfbo-proofs}.
\end{proof}

By minimizing a variational objective averaged under any distribution that is supported on the search space $\gX$, we can recover an acquisition function over $\rvx$ that is equivalent to any expected utility acquisition function in the form of \autoref{eq:myopic-aqf-def}. This new acquisition function is likelihood-free, in the sense that it does not model distributions with a tractable probability and only uses samples from the observations $\gD_n$.
\begin{restatable}{theorem}{mainthm}\label{thm:main}
For $u$ and $f$ as defined in \autoref{lemma:variational-integral},
$\forall \rvx \in \gX$ and $\forall \tau \in \gT$,  
let $L_{\mathrm{LFBO}}^{(u)}(\rvx; \gD_n, \tau) = \hat{S}_{\gD_n, \tau}(\rvx)$, where
\begin{align}
     \hat{S}_{\gD_n, \tau} = \argmax_{S: \gX \to \R} \bb{E}_{\gD_n}[u(y; \tau) f'(S(\rvx)) - f^\star(f'(S(\rvx))]. \nonumber
\end{align} 
Then $L^{(u)}_{\mathrm{LFBO}}$ is equivalent to $\gL^{(u)}$ in \autoref{eq:myopic-aqf-def}. %
\end{restatable}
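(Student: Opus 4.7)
The strategy is to apply \autoref{lemma:variational-integral} pointwise in $\rvx$, exploiting the fact that the optimization in the definition of $\hat{S}_{\gD_n,\tau}$ is over an unrestricted function class $S: \gX \to \R$ with no cross-$\rvx$ coupling in the objective.

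First, I would rewrite the outer expectation as an iterated expectation. Let $q_n(\rvx, y)$ denote the joint distribution induced by $\gD_n$ through the statistical model, with marginal $q_n(\rvx)$ and conditional $q_n(y|\rvx)$. Then
\begin{align*}
& \bb{E}_{\gD_n}[u(y;\tau) f'(S(\rvx)) - f^\star(f'(S(\rvx)))] \\
& = \bb{E}_{q_n(\rvx)}\!\left[\bb{E}_{q_n(y|\rvx)}\!\left[u(y;\tau) f'(S(\rvx)) - f^\star(f'(S(\rvx)))\right]\right].
\end{align*}
Since the validity of the sequential decision process (Definition \ref{def:valid-sdp}) guarantees that $q_n$ places positive probability on every point of $\gX$, and $S$ may be chosen freely at each $\rvx$, the argmax decouples: for $q_n(\rvx)$-a.e.\ $\rvx$, $\hat{S}_{\gD_n,\tau}(\rvx)$ must equal the scalar maximizer of the inner expectation over $s \in [0, \infty)$.

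Next, I would apply \autoref{lemma:variational-integral} to this inner problem with $p(y|\rvx)$ replaced by the model-induced conditional $q_n(y|\rvx)$, giving
\begin{align*}
\hat{S}_{\gD_n,\tau}(\rvx) = \bb{E}_{q_n(y|\rvx)}[u(y;\tau)].
\end{align*}
This is precisely the expected utility computed under the current belief. Under the consistency assumption (\autoref{assu:consistency-full}), $q_n(y|\rvx) \Rightarrow p(y|\rvx)$ in the appropriate sense as $n \to \infty$, so $\plim_{n \to \infty} \hat{S}_{\gD_n,\tau}(\rvx) = \bb{E}_{p(y|\rvx)}[u(y;\tau)]$. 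The identical argument applied to $\gL^{(u)}$ along any other valid process $\gD^{(2)}_n$ yields the same limit, so $L^{(u)}_{\mathrm{LFBO}}$ and $\gL^{(u)}$ coincide in probability in the limit, establishing equivalence with $\alpha = 1$ and $\beta = 0$.

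The main obstacle is justifying the pointwise decomposition of a function-space argmax and the subsequent interchange with the $n \to \infty$ limit. Care is needed to (i) verify that selecting the scalar maximizer at every $\rvx$ yields a measurable function $\hat{S}_{\gD_n,\tau}$; this follows from strict concavity of the inner objective in $s$ (guaranteed by strict convexity of $f$ and the existence of third-order derivatives) via a standard measurable selection argument. One must also (ii) invoke continuity of the argmax under weak convergence of $q_n(y|\rvx)$ to pass from finite-$n$ pointwise maximizers to the limiting expected utility; here the strict concavity again ensures uniqueness of the maximizer, so a standard argmax-continuity result applies pointwise in $\rvx$. The remaining details are bookkeeping on how the model's consistency transfers to convergence of the induced conditional expectations.
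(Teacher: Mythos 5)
Your proposal is correct and follows essentially the same route as the paper's proof: decouple the functional argmax into pointwise scalar problems, apply Lemma 3.3 at each $\rvx$ to identify $\hat{S}_{\gD_n,\tau}(\rvx)$ with a conditional expected utility, and then invoke the consistency assumption to show both $L^{(u)}_{\mathrm{LFBO}}$ and $\gL^{(u)}$ converge in probability to $\bb{E}_{p(y|\rvx)}[u(y;\tau)]$, giving equivalence with $\alpha=1$, $\beta=0$. Your added remarks on measurable selection and argmax continuity are finer-grained than what the paper writes down, but they fill in the same steps rather than change the argument.
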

\begin{proof}
In \Cref{app:lfbo-proofs}.
\end{proof}
The above statement converts the process of obtaining the acquisition function into an optimization problem that only relies on samples, such that the optimized model $\hat{S}_{\gD_n, \tau}$ can directly be used as the acquisition function.
Therefore, we have obtained a general approach to construct acquisition functions that are (\textit{i}) likelihood-free and (\textit{ii}) equivalent to an expected utility acquisition function. Due to the likelihood-free nature of the method, we name our approach Likelihood-free Bayesian Optimization (\methodshort{}).

\subsection{Practical Likelihood-free Acquisition Functions}\label{sec:practical}
From \autoref{thm:main}, we can further choose specific convex $f$ and recover practical objective functions such as a weighted classification one.  
\begin{corollary}[Weighted classification] \label{coro:logistic}
For $u$ as defined in \autoref{lemma:variational-integral}, $\forall \rvx \in \gX$ and $\forall \tau \in \gT$, 
let
\begin{align}
    L_{\mathrm{C}}^{(u)}(\rvx; \gD_n, \tau) = \hat{C}_{\gD_n, \tau}(\rvx) / (1 - \hat{C}_{\gD_n, \tau}(\rvx)),
    \label{eq:classification-acq}
\end{align}
where $\hat{C}_{\gD_n, \tau}$ is the maximizer to the following objective over $C: \gX \to (0, 1)$:
\begin{align}
     \bb{E}_{(\rvx, y) \sim \gD_n}[u(y; \tau) \log C(\rvx) + \log (1 - C(\rvx))], \label{eq:binary-classification}
\end{align}
then $L_{\mathrm{C}}^{(u)}$ is equivalent to $\gL^{(u)}$ in \autoref{eq:myopic-aqf-def}. %
\end{corollary}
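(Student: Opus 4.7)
}
The plan is to derive the corollary as an instantiation of \autoref{thm:main} for a specific choice of strictly convex $f$, together with the reparametrization $s = C/(1-C)$ that turns the unconstrained search over $S \colon \gX \to [0,\infty)$ into a search over $C \colon \gX \to (0,1)$. The natural candidate is the generator associated with the logistic loss in binary classification / density ratio estimation, namely
\begin{equation*}
    f(s) \;=\; s \log s - (s+1) \log(s+1),\qquad s \in [0,\infty),
\end{equation*}
with $0 \log 0 \defeq 0$. The first step is to verify the hypotheses of \autoref{lemma:variational-integral}: compute $f'(s) = \log\!\bigl(s/(s+1)\bigr)$, $f''(s) = 1/\bigl(s(s+1)\bigr) > 0$, and note that $f'''$ exists on $(0,\infty)$, so $f$ is strictly convex with third derivative.

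Next I would compute the Fenchel conjugate evaluated at $f'(s)$ using the standard identity $f^\star(f'(s)) = s\,f'(s) - f(s)$, which gives
\begin{equation*}
    f^\star\!\bigl(f'(s)\bigr) \;=\; s \log\!\tfrac{s}{s+1} - s \log s + (s+1) \log(s+1) \;=\; \log(s+1).
\end{equation*}
Substituting into the variational objective from \autoref{thm:main} yields, pointwise in $\rvx$,
\begin{equation*}
    u(y;\tau)\,\log\!\tfrac{s}{s+1} \;-\; \log(s+1).
\end{equation*}
Now introduce the reparametrization $C = s/(s+1) \in (0,1)$, equivalently $s = C/(1-C) \in [0,\infty)$, which is a smooth bijection. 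Under this change of variables $\log(s/(s+1)) = \log C$ and $-\log(s+1) = \log(1-C)$, so the integrand becomes exactly
\begin{equation*}
    u(y;\tau)\, \log C(\rvx) \;+\; \log\!\bigl(1 - C(\rvx)\bigr),
\end{equation*}
which is the objective in \autoref{eq:binary-classification} after taking the expectation over $\gD_n$.

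The final step is to transfer the argmax across this reparametrization: since $s \mapsto s/(s+1)$ is a strictly increasing bijection between $[0,\infty)$ and $(0,1)$, the maximizer $\hat S_{\gD_n,\tau}$ in \autoref{thm:main} and the maximizer $\hat C_{\gD_n,\tau}$ of \autoref{eq:binary-classification} are related pointwise by $\hat S_{\gD_n,\tau}(\rvx) = \hat C_{\gD_n,\tau}(\rvx)/(1 - \hat C_{\gD_n,\tau}(\rvx))$. Thus $L_{\mathrm{C}}^{(u)} \equiv L_{\mathrm{LFBO}}^{(u)}$, and equivalence with $\gL^{(u)}$ follows immediately from \autoref{thm:main}. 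I expect the main obstacle to be bookkeeping rather than conceptual: carefully handling the boundary values $s = 0$ and $C \in \{0,1\}$ (where $f$ is defined by continuous extension but $f'$ blows up), and making the argmax transfer rigorous when the optimization is over function classes rather than over a single point; both can be addressed by the same pointwise argument used implicitly in \autoref{lemma:variational-integral}.
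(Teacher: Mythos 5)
Your proposal is correct and takes essentially the same route as the paper: the paper's proof uses the identical generator $f(r) = r \log \frac{r}{r+1} + \log \frac{1}{r+1}$ (which equals your $s\log s - (s+1)\log(s+1)$) together with the same substitution $S = C/(1-C)$ into \autoref{thm:main}. You merely spell out the computation of $f^\star(f'(s)) = \log(s+1)$ and the transfer of the argmax across the monotone bijection, which the paper leaves implicit.
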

\begin{proof}
Let $f(r) = r \log \frac{r}{r + 1} + \log \frac{1}{r + 1}$ for all $r > 0$, then $f^\star(c) = - \log (1 - \exp(c))$, $f'(r) = \log r - \log (r + 1)$, and $f''(r) = 1/r - 1/(r+1) > 0$ (so $f$ is convex, and its third order derivative exists). Applying $f(r)$ and $S = C / (1 - C)$ to \autoref{thm:main} completes the proof.
\end{proof}
From \autoref{coro:logistic}, we can optimize a classification objective (\autoref{eq:binary-classification}) with weights defined by the utility function and expect it to behave similarly to the EU acquisition functions in \autoref{eq:myopic-aqf-def}.

\paragraph{\methodshort{} for PI} For PI, $u^{\PI}(y; \tau) := \bb{I}(y - \tau > 0)$, \autoref{eq:binary-classification} becomes:
    $$
\frac{|\gD_n^+|}{|\gD_n|}\underbrace{\bb{E}_{(\rvx, y) \sim \gD_n^{+}}[\log C(\rvx)]}_{\mathrm{positive}} + \underbrace{\bb{E}_{(\rvx, y) \sim \gD_n}[\log (1 - C(\rvx))]}_{\mathrm{negative}},
    $$
    where $\gD_n^+$ denotes the subset of $\gD_n$ where $y > \tau$. Similar to word2vec~\citep{mikolov2013distributed}, we can treat samples in the first term as positive and samples in the second term as negative. Our objective encourages $C$ to assign higher values to $\rvx$ in positive samples and lower values to $\rvx$ in negative samples; all the positive samples have the same weight, which is also the case in BORE.
    
\paragraph{\methodshort{} for EI}    For EI, $u^{\EI}(y; \tau) := \max(y - \tau, 0)$, \autoref{eq:binary-classification} becomes:
    $$
\frac{|\gD_n^+|}{|\gD_n|}\underbrace{\bb{E}_{\gD_n^{+}}[(y - \tau) \log C(\rvx)]}_{\mathrm{positive}} + \underbrace{\bb{E}_{\gD_n}[\log (1 - C(\rvx))]}_{\mathrm{negative}},
    $$
    where positive samples are weighted by $(y - \tau)$ times a constant ($|\gD_n^+|/|\gD_n|$). Different from PI, here we favor $y$ that are not only above the threshold $\tau$, but also have higher values. In practice, the factor $|\gD_n^+|/|\gD_n|$ can be removed, as this amounts to simply scaling the utility function by a constant (since the utilities/weights for the samples with $y < \tau$ are zero). We can also normalize the weights within the positive term such that the average %
    weight is one; we adopt this approach in the experiments.

\subsection{Representation Capability of Likelihood-free Acquisition Functions}\label{sec:capacity}
Unlike GPs that produce a distributional estimate given $\rvx$, models used in likelihood-free acquisition functions (\textit{e.g.}, $C(\rvx)$ in \autoref{eq:binary-classification}) would only produce a point estimate given $\rvx$. Despite the lack of explicit uncertainty estimates over $y$, likelihood-free acquisition functions are not limited in what they can represent; for example, wide-enough neural networks can approximate any acquisition function that is Borel measurable~\citep{hornik1989multilayer}. 
Conversely, even for the same utility $u$, there may exist many different surrogate models of $p(y | \rvx, \gD)$ that give the same acquisition value (see an argument for Gaussian distributions in \autoref{prop:gauss-acqf}). Thus, our likelihood-free method is a more direct approach to acquiring the acquisition function without loss of representation capabilities.

\section{Related Work}

\paragraph{Bayesian Optimization} Many works in Bayesian Optimization adopt GPs as the underlying surrogate model and some form of expected utility as the acquisition function. The utilities in some acquisition functions %
can be expectations themselves (such as ES). Some works aim to make the posterior inference process more scalable, possibly sacrificing the tractability or the exactness of the posterior; these include sparse online GPs~\citep{mcintire2016sparse}, linear models~\citep{perrone2018scalable}, neural network ensembles~\citep{white2019bananas}, Bayesian neural networks~\citep{springenberg2016bayesian}, Bayesian linear models and GPs with neural feature learners~\citep{snoek2015scalable, tran2020methods}, and GPs with deep kernels from meta learning~\cite{wistuba2021few}. Nevertheless, these methods replace GPs with another model and do not change how they interact with the acquisition functions. TPE~\citep{bergstra2011algorithms} is an exception in that it first estimates probability distributions over $\rvx$ (as opposed to over $y$ given $\rvx$) and then obtains the acquisition function through density ratios over $\rvx$. Ratio estimation approaches have also been combined with mutual information acquisition functions for Bayesian experimental design~\citep{kleinegesse2019efficient,zhang2021scalable}. %

The work most related to ours is BORE~\citep{tiao2021bore}, which applies likelihood-free density ratio estimation to the TPE acquisition function. Our approach extends BORE over two aspects: (\textit{i}) we formally identify that BORE (and equivalently, TPE) is not equivalent to EI, and that it is only equivalent to the PI utility function; (\textit{ii}) we derive a likelihood-free approach that applies to not only EI, but also general expected utility functions.

\paragraph{Likelihood-free inference} Likelihood-free inference methods have been widely applied to implicit models where tractable forms of the likelihood do not exist, such as generative adversarial networks (GANs,~\citet{goodfellow2014generative}) and hierarchical implicit models~\citep{tran2017hierarchical}. These are often posed as density/likelihood ratio estimation problems from samples. Various methods have been proposed beyond direct kernel density estimation approaches, such as KL importance estimation procedure (KLIEP, \citet{sugiyama2007direct,sugiyama2012density}), kernel mean matching (KMM, \citet{gretton2009covariate}), and unconstrained least-squared importance fitting (ULSIF, \citep{kanamori2010theoretical}).
The likelihood-ratio estimation approach used in GANs can be interpreted as class probability estimation based on proper scoring rules~\citep{buja2005loss,gneiting2007strictly}, variational $f$-divergence estimation~\citep{nguyen2008estimating}, or ratio matching~\citep{sugiyama2013density,uehara2016generative}. These perspectives are known to be equivalent to one another~\citep{reid2011information,mohamed2016learning} and are also applied in various domains such as generative modeling~\citep{gutmann2010noise,song2020bridging,yu2020training}, domain adaptation~\citep{bickel2007discriminative}, inverse reinforcement learning~\citep{yu2019multi}, and black-box sequence design~\citep{zhang2021unifying}. 

Our likelihood-free acquisition function is different from these prior approaches (such as class probability estimation used in BORE) in that we are not necessarily estimating the ratio between two probability measures defined over $\gX$ (except for the case of PI), but the ratio between a general non-negative measure (defined as some expected utility over the positive samples) and a probability measure (over the negative samples). Nevertheless, our approach is closely related to these perspectives, and can be treated as a classification problem with unnormalized weights determined by the utility values.

\section{Experiments}

In this section, we evaluate our likelihood-free BO approach on multiple synthetic and real-world optimization tasks. We first verify our theory by comparing against ground truth targets in a synthetic case (Section~\ref{sec:exp:theory}). Next, we compare against various baselines, especially BORE~\citep{tiao2021bore},  illustrating the advantages of the EI utility function (as well as others) in conjunction with our likelihood-free BO approach (Section~\ref{sec:exp:pi-vs-ei}). Finally, we illustrate how we can massively improve performance by integrating \methodshort{} with existing knowledge about the objective function, specifically with a composite, grey-box function optimization problem (Section~\ref{sec:exp:comp}). 
Here, \methodshort{} uses the weighted classification approach in~\autoref{eq:binary-classification},~\autoref{coro:logistic}; we leave the possibilities of applying different $f$ in~\autoref{thm:main} as future work. For all BO experiments, we first query $10$ random candidates uniformly over the search space, and then proceed with the BO methods. %

\subsection{Verifying the Equivalence Relationship of Likelihood-free Acquisition Functions}
\label{sec:exp:theory}

\begin{figure}
    \centering
    \includegraphics[width=0.47\textwidth]{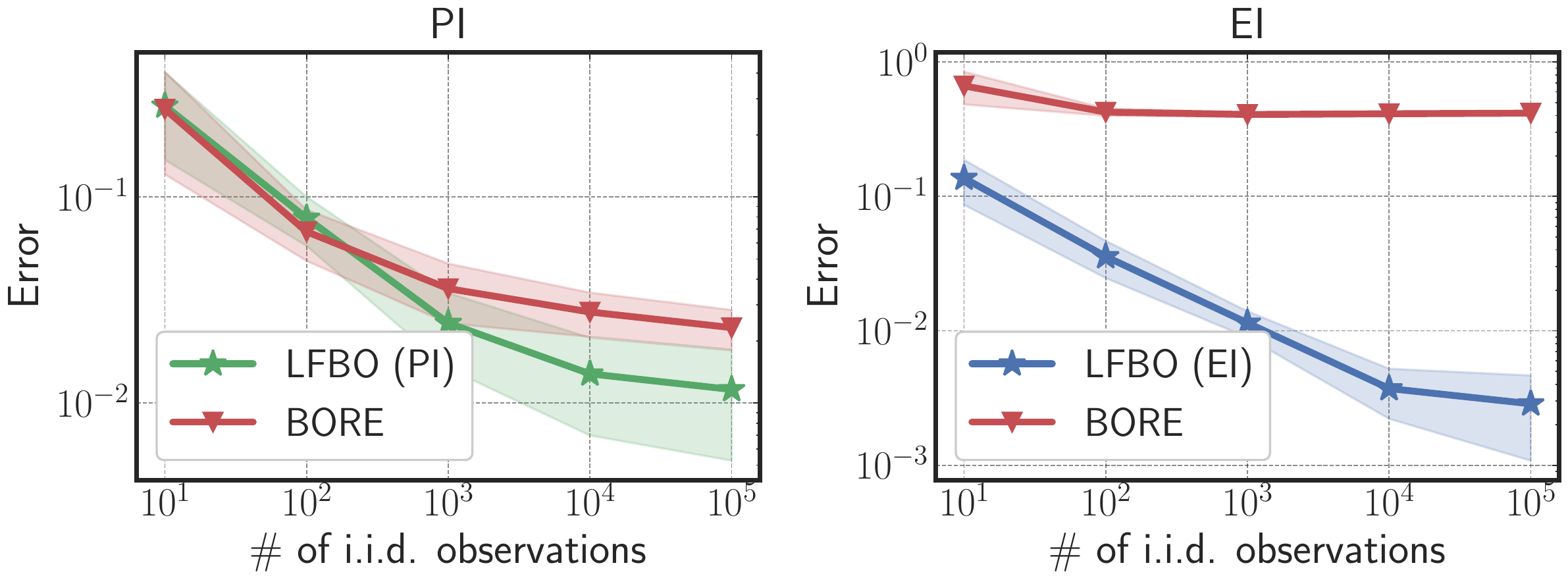}
    \caption{$L_1$ distance between the modeled acquisition functions and the ground truth. $x$ is drawn uniformly between $[-1, 1]$. The shaded regions show the mean plus and minus one standard deviation across different seeds. The ground truth acquisition functions for the left and right figures are Probability of Improvement and Expected Improvement respectively.}
    \label{fig:synthetic-equivalence-main}
\end{figure}

We first validate the theory over our proposed likelihood-free acquisition functions over a synthetic problem, where the search space is $x \in [-1, 1]$ and observations are obtained from $g(x) = - \sin(3x) - x^2 + 0.6x$ plus independent Gaussian noise $\epsilon \sim \gN(0, 0.01)$ (see \autoref{fig:demo} for the function). This allows to evaluate the ``ground truth'' EI and PI values for any $x$ in the search space (similar to how EI and PI are computed for GPs). We estimate the ground truth EI and PI with our \methodshort{} acquisition functions, with BORE~\citep{tiao2021bore} as a baseline. The classifier model is a two layer fully-connected neural network with $128$ units at each layer; we repeat all experiments with $5$ random seeds. As our goal here is to validate the consistency claims, we only take \textit{i.i.d.} samples uniformly and do not perform any adaptive queries.

If a likelihood-free acquisition function is indeed equivalent to PI or EI, then we should expect it to become closer to the ground truth as the number of observations $n$ increases. We report the quality of the acquisition functions, measured in $L_1$ error from the ground truth, in \autoref{fig:synthetic-equivalence-main} (more in~\autoref{fig:synthetic-equivalence-appendix}, Appendix~\ref{app:sec:syn}). For \methodshort{}, both errors are roughly inversely proportional to $n$, whereas the same can only be said for BORE and PI; the error between BORE and EI remains large even with many samples. Therefore, the empirical results align with our statements in \autoref{prop:consistency-pi}, \autoref{thm:main}, and \autoref{coro:logistic}.

\subsection{Real-world Benchmarks}
\label{sec:exp:pi-vs-ei}

\begin{figure*}
    \centering
    \includegraphics[width=.95\textwidth]{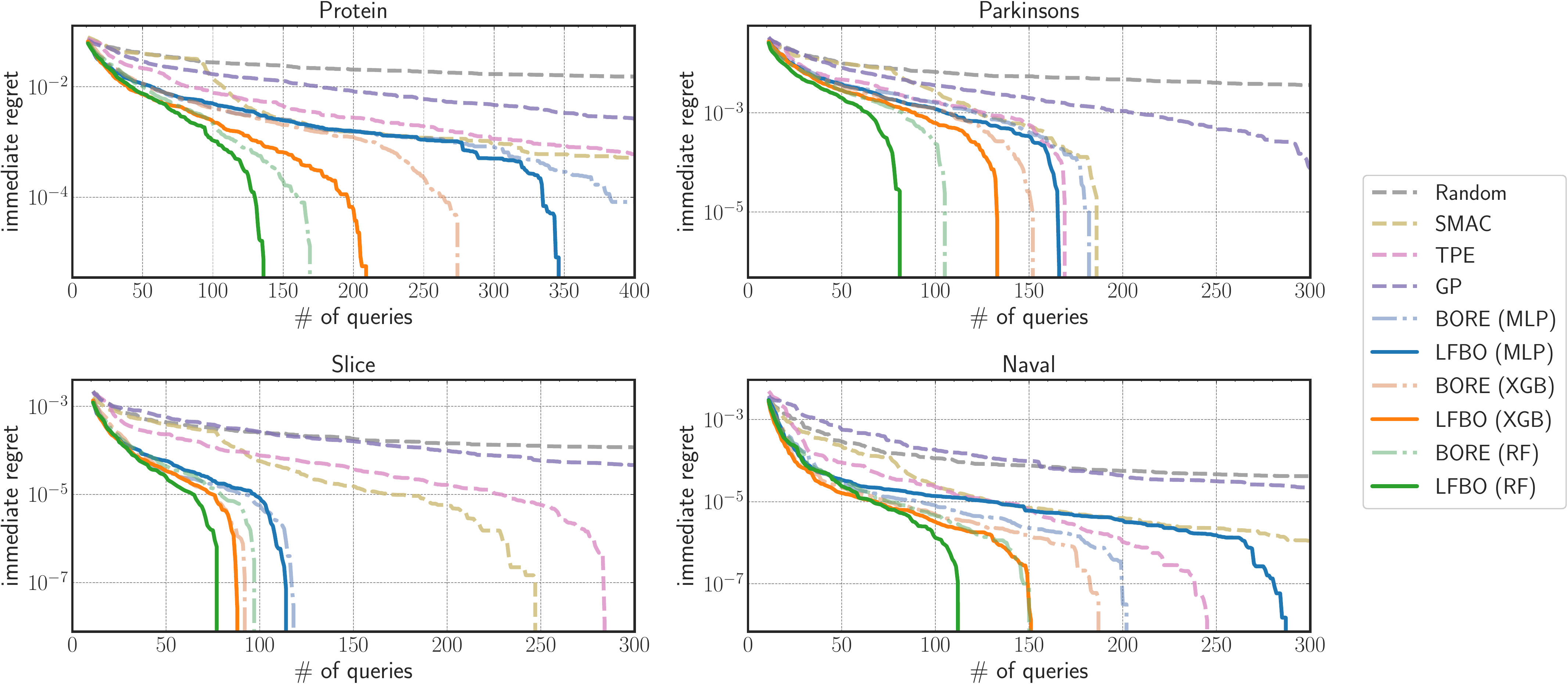}
    \vspace{-10pt}
    \caption{Immediate regret on the HPOBench neural network tuning problems. LFBO uses the utility function for EI. Each curve is the mean of results from 100 random seeds.}
    \label{fig:hpobench}
    \vspace{-5pt}
\end{figure*}

\begin{figure*}
    \centering
    \includegraphics[width=.95\textwidth]{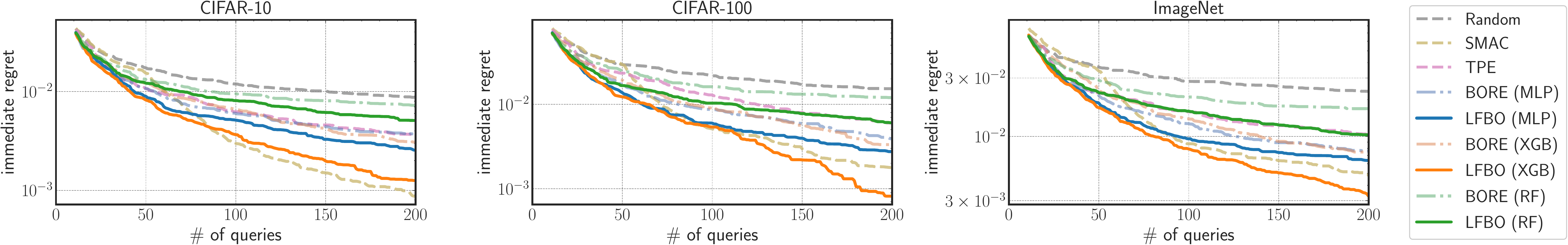}
    \vspace{-10pt}
    \caption{Immediate regret on the NAS-Bench-201 neural architecture search problems. LFBO uses the utility function for EI. Each curve is the mean of results from 100 random seeds.}
    \label{fig:nasbench}
    \vspace{-5pt}
\end{figure*}

\textbf{Setup.} We empirically evaluate our method on various real-world optimization tasks, including hyperparameter optimization of neural networks (HPOBench, \citet{klein2019tabular}) and neural architecture search (NAS-Bench-201, \citet{dong2020bench}). We provide more details for these experiments in \Cref{app:exp-real}.

In HPOBench, we aim to find the optimal hyperparameter configurations for training a two-layer feed-forward neural networks on four popular UCI datasets \citep{asuncion2007uci} for regression: protein structure (Protein), slice
localization (Slice), naval propulsion (Naval) and parkinsons telemonitoring (Parkinsons). The configuration space includes batch size, initial learning rate, learning rate schedule, as well as layer-specific dropout rates, number of units and activation functions. For each dataset, HPOBench tabulates the mean squared errors of all $62,208$ configurations.

In NAS-Bench-201, the network architecture is constructed as a stack of neural cells, each represented as a directed acyclic graph (DAG), and we focus on the design of the neural cell, \textit{i.e.}, assigning an operation to 6 edges of the DAG from a set of 5 operations. Similar to HPOBench, this dataset also provides a lookup table that tabulates the training results of all $15,625$ possible configurations for three datasets: CIFAR-10, CIFAR-100 \citep{krizhevsky2009learning} and ImageNet-16 \citep{chrabaszcz2017downsampled}. 

In LFBO, we need to train a probabilistic classifier %
for candidate suggestion at each iteration. Here, we consider Multi-layer perceptions (MLP), Random Forest (RF, \citet{breiman2001random}), and gradient boosted trees (XGBoost, \citet{chen2016xgboost}).
The MLP-based models are differentiable with respect to its inputs %
but requires special care for categorical variables (\textit{e.g.}, with one-hot vectors). In contrast, the tree-based ensemble classifiers can naturally deal with the discrete inputs but are non-differentiable. 
Despite this, for optimizing the acquisition functions, we use random search for all methods on all of the datasets, \textit{i.e.}, first randomly sampling a large batch of candidates and then selecting the one with the maximum acquisition value in \autoref{eq:classification-acq}.

We compare our approach with a comprehensive set of state-of-the-art black-box optimization methods, including BORE \citep{tiao2021bore}, Tree-structured Parzen Estimator (TPE, \citet{bergstra2011algorithms}), Sequential Model-based Algorithm Configuration (SMAC, \citet{hutter2011sequential}), GP-based Expected Improvement \citep{jones1998efficient,balandat2020botorch} and Random Search \citep{bergstra2012random}. Since LFBO with the utility function for PI would behave similarly to BORE, we consider LFBO with the utility function for EI in these experiments. Our implementation is based on an implementation of BORE, and we inherit all its default hyperparameters. Notably, we select the threshold $\tau$ to be the $(1 - \gamma)$-th quantile of the observed values, which includes less positive samples as $\gamma$ becomes smaller. We select $\gamma = 0.33$\footnote{We perform a brief ablation study on this parameter in Appendix~\ref{app:sec:ablation-gamma}, showing that LFBO (EI) is less sensitive to it.}, following \citep{tiao2021bore}.

\textbf{Results.} Following \citep{tiao2021bore}, we report the immediate regret (\textit{i.e}. the absolute error between the global minimum of each benchmark and the lowest function value observed so far)\footnote{We note that here and in \citep{tiao2021bore}, the actual output from the black-box function is a random value from several possible ones (simulating noise in the optimization process), whereas the global minimum considers the expectation over the values. Therefore, a negative immediate regret is possible (and we would use the corresponding neural network for deployment). The flat vertical lines in our plots mean that the average immediate regret (across all the seeds) has become negative.} for quantitative evaluation of each method. The results for HPOBench and NAS-Bench-201 are summarized in \autoref{fig:hpobench} and \autoref{fig:nasbench} respectively, where the results are averaged across \textit{100 random seeds}. We can see that our LFBO approach consistently achieves the best result across all the datasets, except on NAS-Bench-201-CIFAR-10 where the SMAC method is slightly better. 

Notably, with all three classifier implementations (MLP, RF and XGBoost), LFBO (EI) consistently outperforms its BORE counterparts except for Naval with MLP, which demonstrates the effectiveness of choosing EI as the acquisition function and the value of enabling EI in the likelihood-free scenario. %
Moreover, we observe that GP-based EI methods (where we use Matern-Kernel with default hyperparameters values in the BoTorch\footnote{\url{https://github.com/pytorch/botorch/blob/main/botorch/models/gp_regression.py}} library) %
perform poorly on both benchmarks (omitted on NAS-Bench-201 where its performance is close to random search), possibly due to the complexity of attempting to learn a full distributional description of the underlying function. These results show the great potential of our generalized likelihood-free BO paradigm for strategic sequential decision making tasks.

\begin{figure}
    \centering
    \includegraphics[width=0.47\textwidth]{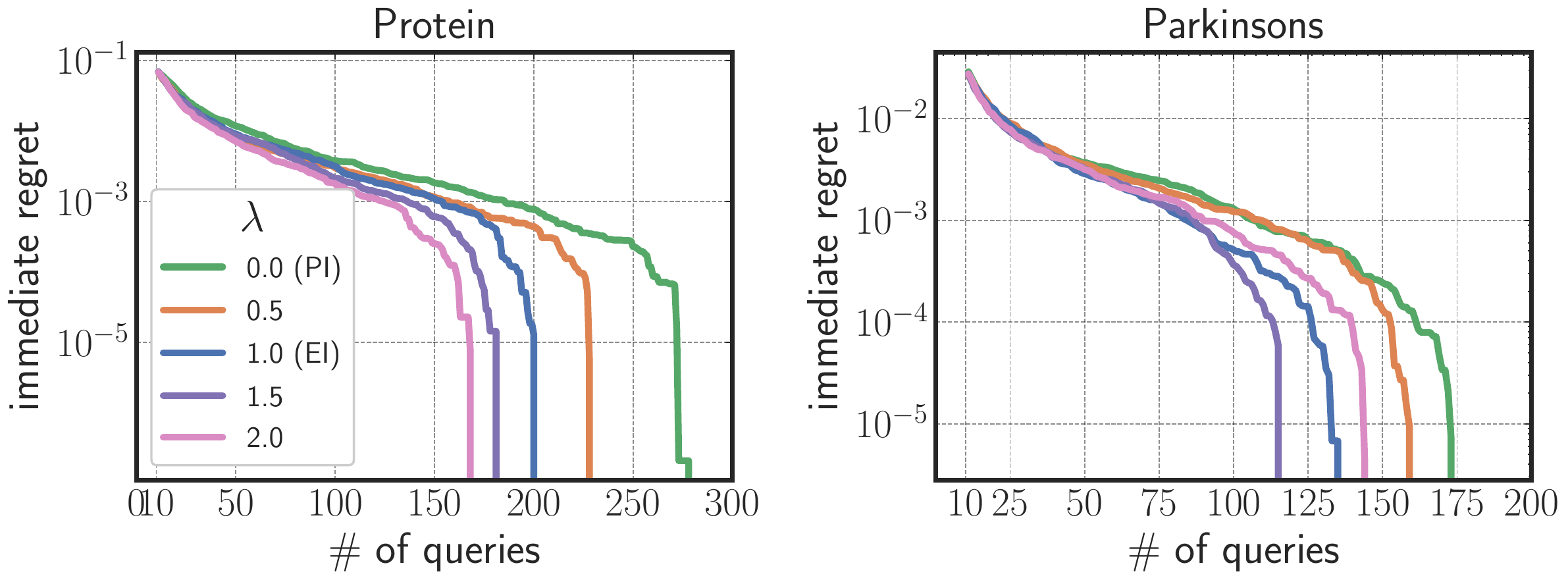}
    \caption{Results on two HPOBench problems with different utility functions, controlled by $\lambda$. %
    }
    \label{fig:ablation_lambda}
\end{figure}

\textbf{Alternative utility functions.} Since \methodshort{} allows any non-negative utility function, we may consider other alternatives to EI and PI. For example, we consider the following function:
$u^{\lambda}(y; \tau) = (y - \tau)^\lambda$ if $y > \tau$; $0$ otherwise. $u^{\lambda}$ generalizes EI when $\lambda = 1$ and PI when $\lambda = 0$. \autoref{fig:ablation_lambda} illustrate the results on two HPOBench problems with $\lambda \in \{0.0, 0.5, 1.0, 1.5, 2.0\}$ and the XGBoost classifier. These results show that performance on these tasks increases as $\lambda$ goes from $0$ (PI) to $1$ (EI), and that further improvements can be achieved by considering $\lambda$ greater than $1$; in this case, the utility with $\lambda = 1.5$ outperforms the EI one on both problems.

\subsection{LFBO with Composite Objective Functions}
\label{sec:exp:comp}

Here, we consider the case of composite objective functions~\citep{astudillo2019bayesian,maddox2021bayesian}, where $g(\rvx)$ is a known transformation of a vector-valued black-box function $h(\rvx): \gX \to \R^d$, \textit{e.g.}, $$g(\rvx) = -\norm{h(\rvx) - z_\star}_2^2,$$ and $z_\star \in \R^d$ is a vector observed from the real world. Compared to standard BO approaches that do not exploit this information, a grey-box BO method that explicitly models $h$ can make a much better sampling decision~\citep{astudillo2022thinking}. However, if we model $h(\rvx)$ with GPs, then the EI acquisition function defined over the above $g$ is no longer tractable, and traditional approaches would rely on closed-form but biased approximations~\citep{uhrenholt2019efficient} or unbiased but slow Monte Carlo estimates~\citep{astudillo2019bayesian}.

In \methodshort{}, we can also exploit the composite nature of the objective function to obtain more efficient BO algorithms. Specifically, we may consider a composite neural network $C_\theta(\rvx): \gX \to \R$ parametrized by training parameters $\theta$: $$C_\theta(\rvx) = \frac{u(-\norm{h_\theta(\rvx) - z_\star}_2^2; \tau)}{u(-\norm{h_\theta(\rvx) - z_\star}_2^2; \tau) + 1},$$ where we use a neural network\footnote{While other models (such as trees) can be used here in principle as well, we focus on neural networks as gradient-based optimization of the likelihood-free objective function (training) is more straightforward.} for $h_\theta(\rvx): \gX \to \R^d$. As long as $u$ allows automatic differentiation (\textit{e.g.}, in the case of EI), we can apply backpropagation over $\theta$ to optimize it.

We consider a well-known test problem in the Bayesian calibration literature~\citep{bliznyuk2008bayesian} that models the concentration of substances in a chemical accident. Our goal is to find the underlying parameters, such as the mass of pollutants, that minimizes the sum of squared errors with the real-world observation at 12 locations. We include more details in Appendix~\ref{app:sec:composite}. 

Results of this experiment %
are in \autoref{fig:comp-regret}. Our composite \methodshort{} (EI) approach performs significantly better than regular GP and LFBO, while being on par with a GP approach based on composite EI~\citep{astudillo2019bayesian}. Moreover, once our neural network has been trained, we do not require sampling from the surrogate to evaluate the composite acquisition function (unlike the GP counterpart which relies on Monte Carlo samples). Even with training taken into account, our composite \methodshort{} method is much more efficient (around $15\times$ to $30\times$) when evaluated on the same hardware. 

\begin{figure}
    \centering
    \includegraphics[width=0.42\textwidth]{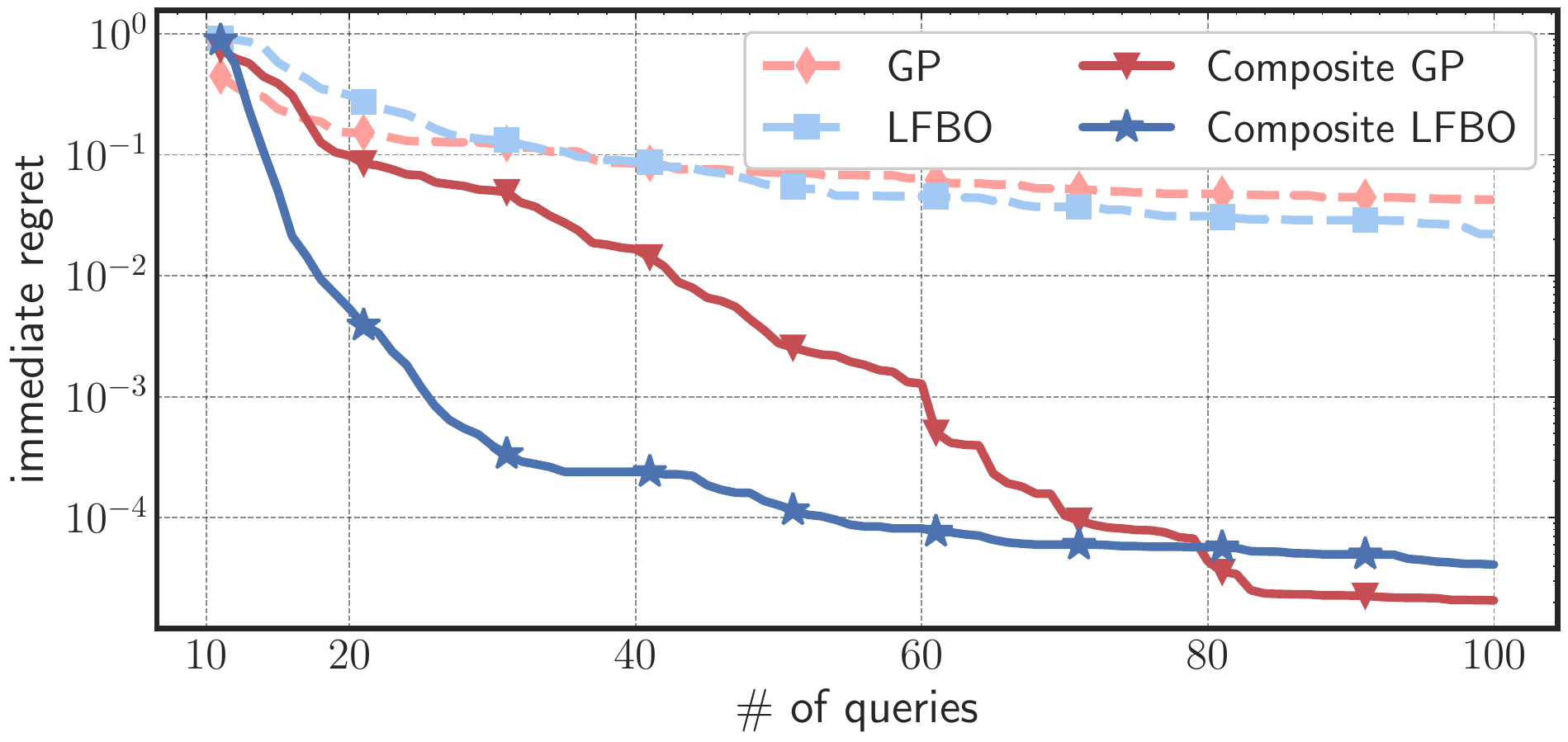}
    \caption{Results on composite objective function optimization. The utility function for EI is used by all the methods considered.}
    \label{fig:comp-regret}
\end{figure}

\section{Discussions}

We have introduced likelihood-free BO (\methodshort{}) that extends BO to general surrogate models and utility functions. Motivated by likelihood-free inference, \methodshort{} directly estimates the acquisition function from samples. %
With infinite query efforts in the search space, \methodshort{} will converge to the desired expected utility function.
\methodshort{} can be reduced to learning the optimal classifier of a weighted classification problem. %
Empirical results demonstrate both the validity of our theory and the usefulness of \methodshort{} (EI) empirically in black-box and composite function optimization settings. %

\paragraph{Limitations} While we observe some empirical successes with LFBO, we believe that it has certain limitations.
\begin{itemize}
    \item First, because LFBO does not explicitly learn a distributional representation about our current knowledge of the black-box function, the behaviors of the algorithm (explore-exploit trade-off) may be less interpretable. 
    \item Second, LFBO combines two separate steps (learning the surrogate model, and maximizing the acquisition function based on the learned surrogate model) into one classification problem, thus the trained model is tied to one specific acquisition function, while GP-based methods can be more flexible since the surrogate model and the acquisition function are decoupled.
    \item Third, LFBO has to choose a threshold below the current optimal observation, since otherwise no observations will fall into the positive class, and the model will not learn anything useful. This is also tied to the fact that LFBO models do not provide a probabilistic interpretation to the underlying function.
    \item Finally, there is a risk where the acquisition function from LFBO is over-confident, and would reduce exploration diversity (see the example on the Forrester function in \autoref{fig:forrester}).
\end{itemize}
We believe that these could be mitigated with approaches that still provide a probabilistic interpretation to the black-box function, but are not necessarily have tractable acquisition functions like GPs; we can use LFBO as trying use a model to amortize the acquisition function (that is otherwise intractable), which could be faster to optimize than using Monte Carlo samples. 

\paragraph{Future work} Apart from applying \methodshort{} to various application domains, there are several interesting avenues for future works:
\begin{itemize}
    \item Extensions to more sophisticated BO paradigms, such as batch-based, multi-fidelity~\citep{forrester2007multi}, multi-objective~\citep{hernandez2016predictive}, and complex composite functions~\citep{astudillo2021bayesian}.
    \item Integration of \methodshort{} with explicit and implicit surrogate functions as an amortization approach, which may accelerate approaches such as knowledge gradients~\citep{frazier2008knowledge}.
    \item \methodshort{} with alternative choices of the scoring rule (defined via $f$), better heuristics of selecting the threshold $\tau$, and additional techniques that bridge the gap between \methodshort{} and GP-based BO.
    \item Exploring LFBO with other information acquired when the exact function value is unknown, such as ordering relationships~\citep{christiano2017deep}.
\end{itemize}

\section*{Acknowledgements}
The authors would like to thank Louis Tiao, Roman Garnett, Eytan Bakshy, and the anonymous reviewers for helpful feedback, as well as Chris Yeh and Raul Astudillo for spotting a typo.
This research was supported by NSF (\#1651565, \#1522054, \#1733686), ONR (N00014-19-1-2145), AFOSR (FA9550-19-1-0024), and Amazon AWS.

\bibliography{references}
\bibliographystyle{icml2022}

\newpage
\appendix
\onecolumn

\section{Proofs}
\label{app:proofs}

\subsection{Definitions and Assumptions}

\begin{definition}[A valid sequential decision making process]\label{def:valid-sdp}
Let $p_t(\rvx | \gD_{t-1})$ denote the distribution over $\rvx$ at time $t$ given observations $\gD_{t-1} = \{(\rvx_i, y_i)\}_{i=1}^{t-1}$ (which depends on the decision making process), and let $p(y | \rvx)$ be the conditional distribution (which depends on the true black-box function $g$); then we define a sequential and Markovian decision making process as follows:
\begin{gather}
    \rvx_t \sim p_t(\rvx | \gD_{t-1}), y_t \sim p(y | \rvx), \gD_{t} = \gD_{t-1} \cup \{(\rvx_t, y_t)\}, \gD_{0} = \varnothing. \label{eq:sdp}
\end{gather}
Define $p_t(\rvx)$ as the marginal distribution of the selected $\rvx$ at time $t$; a sequential decision making process is valid as long as $\plim_{t \to \infty} p_t(\rvx)$ exists and is supported on the search space $\gX$. We would commonly denote this limiting distribution as $p(\rvx)$.
\end{definition}

\begin{remark}
An $\epsilon$-greedy strategy generates a valid sequential decision making process (which would contain a mixture component that is the uniform distribution over $\gX$). Moreover, in both expected utility and density ratios, the marginal distribution over $\rvx$ can be canceled out, so it does not affect the acquisition functions discussed in this paper.  %
\end{remark}

The existence of a limiting ``ground truth'' distribution supported over the entire search space allows us to define to whose statistics our model estimators are converging to in probability.
\begin{assumption}[Consistency of statistical models]\label{assu:consistency-full}
Let $\gD_n = \{(\rvx_i, y_i)\}_{i=1}^{n}$ be a set of $n$ observations drawn from a valid sequential decision making process, whose limiting distribution is defined as $p(\rvx, y)$. For any statistical model that we consider, let $E_n(\gD_n)$ be the estimator of a statistic $E$ of $p(\rvx, y)$, then:
\begin{align}
    \plim_{n \to \infty} E_n(\gD_n) = E.
\end{align}
For example:
\begin{gather}
    \plim_{n \to \infty} p(y | \rvx, \gD_n) = p(y | \rvx), \quad \forall \rvx \in \gX, y \in \R \\
    \plim_{n \to \infty} p(\rvx| \gD_n) = p(\rvx), \quad \forall \rvx \in \gX \\
    \plim_{n \to \infty} p(\rvx | y > \tau, \gD_n) = p(\rvx | y > \tau), \quad \forall \rvx \in \gX, y \in \R, \tau \in \R.
\end{gather}
\end{assumption}

\subsection{Statements Regarding Density Ratios}
\label{app:dr-proofs}

These results are also discussed by this textbook~\citep{garnett_bayesoptbook_2022}, albeit without an explicit proof. %

\consistencypi*
\begin{proof}
From the consistency assumptions over $p(y | \rvx, \gD_n)$, we have that $\forall \rvx \in \gX, y \in \R$:
\begin{align}
    \plim_{n \to \infty} \gL^{\PI}(\rvx, \gD_n, \tau) &= \plim_{n \to \infty} \bb{E}_{p(y | \rvx, \gD_n)}[\mathbb{I}(y > \tau)] \\
    &= \plim_{n \to \infty} \int_{y=\tau}^{\infty} p(y | \rvx, \gD_n) \diff y \\
    &= \plim_{n \to \infty}  p(y > \tau | \rvx, \gD_n) = p(y > \tau | \rvx).
\end{align}
From the consistency assumptions over $p(\rvx | y > \tau, \gD_n)$ and $p(\rvx | \gD_n)$, we have that $\forall \rvx \in \gX, y \in \R$:
\begin{align}
   \gamma \plim_{n \to \infty} L^{\DR}(x; \gD_n, \tau) &= \gamma \plim_{n \to \infty}  \frac{p(\rvx | y > \tau, \gD)}{p(\rvx | \gD)} \\
    &= \gamma \frac{p(\rvx | y > \tau)}{p(\rvx)} \label{eq:slutsky-application} \\
    &= \frac{p(\rvx | y > \tau) p(y > \tau)}{p(\rvx)} = p(y > \tau | \rvx),
\end{align}
where $\gamma := p(y > \tau)$ does not depend on $\rvx$, and we apply Slutsky's theorem in \autoref{eq:slutsky-application}. Therefore,
\begin{align} 
\plim_{n \to \infty} L^{\DR}(\rvx; \gD_n, \tau) \propto \plim_{n \to \infty} \gL^{\PI}(\rvx; \gD_n, \tau) %
\end{align}
which completes the proof.
\end{proof}

\begin{restatable}{corollary}{inconsistencyei}\label{coro:inconsistency-ei}
$L^{\DR}$ is not equivalent to $\gL^{\EI}$.
\end{restatable}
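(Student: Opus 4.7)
The plan is to prove the non-equivalence by contradiction, leveraging \autoref{prop:consistency-pi} which we just established. Suppose $L^{\DR}$ were equivalent to $\gL^{\EI}$. Since the equivalence relation is transitive (composing the positive constants and shifts), this would force $\gL^{\PI}$ to be equivalent to $\gL^{\EI}$: there would exist a positive constant $\alpha$ and a constant $\beta$ such that for all $\rvx \in \gX$ and $\tau \in \gT$,
\begin{align*}
\plim_{n \to \infty} \gL^{\PI}(\rvx; \gD_n, \tau) = \alpha \plim_{n \to \infty} \gL^{\EI}(\rvx; \gD_n, \tau) + \beta.
\end{align*}
So it suffices to exhibit a single black-box function $g$ for which this relation fails.

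The counterexample I would use is the simplest possible one: a deterministic (or near-deterministic) function taking two distinct strictly positive values at two locations $\rvx_1, \rvx_2 \in \gX$ for a threshold $\tau$ below both values. Under \autoref{assu:consistency-full}, the limiting expected utilities reduce to $\plim \gL^{\PI}(\rvx; \gD_n, \tau) = P(y > \tau \mid \rvx)$ and $\plim \gL^{\EI}(\rvx; \gD_n, \tau) = \E[(y - \tau)_+ \mid \rvx]$. Concretely, let $g(\rvx_1) = 1$, $g(\rvx_2) = 2$ with negligible noise and pick $\tau = 0$: then the PI limits equal $1$ at both points, while the EI limits equal $1$ and $2$ respectively. (A fully noiseless version can be realized under the framework by using vanishing Gaussian noise, or the argument can be stated directly with two conditional distributions having equal tail mass above $\tau$ but different conditional means of $(y-\tau)_+$.)

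Substituting into the assumed affine relation gives the pair $1 = \alpha \cdot 1 + \beta$ and $1 = \alpha \cdot 2 + \beta$, which forces $\alpha = 0$ and contradicts the requirement $\alpha > 0$ in the definition of equivalence. Hence the supposition fails and $L^{\DR}$ cannot be equivalent to $\gL^{\EI}$.

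I do not anticipate a substantive obstacle: the result is essentially an observation that PI depends only on the exceedance probability while EI depends on the conditional shortfall, so any two inputs that have the same exceedance probability but different conditional means above $\tau$ separate the two acquisition functions. The only care needed is to ensure that the chosen conditional distributions arise from a legitimate noise model so that \autoref{assu:consistency-full} applies and the $\plim$ expressions are well defined; this is handled by allowing a small Gaussian noise term, which does not disturb the strict inequality of the EI values.
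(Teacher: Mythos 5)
Your proposal is correct and follows the same overall strategy as the paper: reduce the claim via \autoref{prop:consistency-pi} to showing that $\gL^{\PI}$ and $\gL^{\EI}$ cannot be related by a positive affine map, then exhibit a concrete limiting model where this fails. The only real difference is the choice of counterexample and the level of care. The paper takes $p(x)=\gN(0,1)$, $p(y\mid x)=\gN(x,1)$, $\tau=0$, computes the limits $\Phi(x)$ and $x\Phi(x)+\phi(x)$, and concludes from their pointwise inequality; strictly speaking that last step only rules out equality, not an affine relation with $\alpha>0$ and $\beta$, although the two functions are indeed not affinely related (one is bounded, the other is not). You instead use two inputs with equal exceedance probability but different conditional means of $(y-\tau)_+$, and explicitly solve the resulting system $1=\alpha+\beta$, $1=2\alpha+\beta$ to force $\alpha=0$, which directly contradicts the $\alpha>0$ requirement in the definition of equivalence. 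That is, if anything, a cleaner and more complete argument than the paper's. The one point to tidy up is the noise model: with homoscedastic additive Gaussian noise $y=g(\rvx)+\epsilon$ the exceedance probabilities at $\rvx_1,\rvx_2$ would not be exactly equal, so you should commit to either the fully deterministic conditionals (Dirac measures are legitimate $p(y\mid\rvx)$ under \autoref{def:valid-sdp}) or an explicitly heteroscedastic pair such as $p(y\mid\rvx_1)=\gN(1,\sigma^2)$, $p(y\mid\rvx_2)=\gN(2,4\sigma^2)$, which makes the tail masses above $\tau=0$ match while the EI values differ by a factor of two; you already flag this, so it is a presentational fix rather than a gap.
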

\begin{proof}

Let $p(x) = \gN(0, 1)$, $p(y | x) = \gN(x, 1)$ defines a limit to the valid sequential decision process in \autoref{def:valid-sdp}, and let $\tau = 0$. Then applying the consistency assumptions over $p(y | x, \gD_n)$:
\begin{align}
    \plim_{n \to \infty} \gL^{\PI}(x; \gD_n, \tau) = \bb{E}_{p(y | x)}[\mathbb{I}(y > 0)] = \Phi(x),
\end{align}
where $\Phi(x)$ is the CDF of the standard Gaussian distribution. Meanwhile,
\begin{align}
    \plim_{n \to \infty} \gL^{\EI}(x; \gD_n, \tau) = \bb{E}_{p(y | x)}[\max(y, 0)] = x \Phi(x) + \phi(x),
\end{align}
where $\phi(x)$ 
is the PDF of the standard Gaussian distribution.
Applying \autoref{prop:consistency-pi}, we have that:
\begin{align}
    \plim_{n \to \infty} L^{\DR}(x; \gD_n, \tau) = \plim_{n \to \infty} \gL^{\PI}(x; \gD_n, \tau) \neq \plim_{n \to \infty} \gL^{\EI}(x; \gD_n, \tau),
\end{align}
for some $x$, which completes the proof.

We note that the statement directly contradicts previous claims regarding the relationship between EI and DR~\citep{bergstra2011algorithms,tiao2021bore}.
The following claim restates the results in \citet{bergstra2011algorithms,tiao2021bore}, which claim an equivalence between density (or likelihood) ratio acquisition functions and expected improvements\footnote{We note that \citet{bergstra2011algorithms,tiao2021bore} minimizes the BO objective in their formulation, whereas we paraphrase the claim to adapt to maximization.}:
\begin{claim}[\citep{bergstra2011algorithms,tiao2021bore}] Following notations in~\citet{tiao2021bore}, let $\tau$ be the $(1 - \gamma)$-th percentile of observed $y$ values (\textit{i.e.}, $\gamma = p(y > \tau)$), and let two densities $\ell(\rvx) := p(\rvx | y \leq \tau, \gD_n)$ and $g(\rvx) := p(\rvx | y > \tau, \gD_n)$ be the densities for $\rvx$ conditioned on $y$ being less or greater than the threshold, respectively. Then:
\begin{align}
\mathrm{EI}(\rvx, \tau) := \bb{E}_{p(y | \rvx, D_n)}[\max(y - \tau, 0)] \propto \frac{p(\rvx | y > \tau, \gD_n)}{(1 - \gamma) p(\rvx | y \leq \tau, \gD_n) + \gamma p(\rvx | y > \tau, \gD_n)}.
\end{align}
\end{claim}

\textit{Counter claim}. We will now go through the ``proof'' of the claim and identify where the argument does not hold. 

First we have that:
\begin{align}
    \mathrm{EI}(\rvx, \tau) = \bb{E}_{p(y | \rvx, D_n)}[\max(y - \tau, 0)] %
    & = \int_{\tau}^{\infty} \max(y - \tau, 0) p(y | \rvx, D_n) \diff y \\
    & = \frac{\int_{\tau}^{\infty} \max(y - \tau, 0) p(y | D_n) p(\rvx | y, D_n) \diff y}{p(\rvx | D_n)}, \label{eq:frac}
\end{align}
where we applied Bayes' rule in \autoref{eq:frac}. 
Then, the denominator evaluates as:
\begin{align}
    p(\rvx | D_n) &= p(\rvx | y  \leq \tau, D_n) p(y \leq \tau | D_n) + p(\rvx | y  > \tau, D_n) p(y > \tau | D_n) \\
    &= \ell(\rvx) p(y \leq \tau | D_n) + g(\rvx) p(y > \tau | D_n) = (1 - \gamma) \ell(\rvx) + (1 - \gamma) g(\rvx). \label{eq:denom}
\end{align}
where we use the definition that $\gamma := p(y > \tau | D_n)$. Finally, \citet{bergstra2011algorithms,tiao2021bore} claims that the numerator in \autoref{eq:frac} evaluates to:
\begin{align}
    \int_{\tau}^{\infty} \max(y - \tau, 0) p(y | D_n) p(\rvx | y, D_n) \diff y =  g(\rvx) \int_{\tau}^{\infty} \max(y - \tau, 0) p(y | D_n) \diff y \label{eq:numer}
\end{align}
which is $g(\rvx)$ times a value independent of $\rvx$. Dividing \autoref{eq:numer} with \autoref{eq:denom} and using \autoref{eq:frac} should recover the result. However, in \autoref{eq:numer}, $g(\rvx) := p(\rvx | y > \tau)$ is directly taken out of the integral, as \citet{bergstra2011algorithms} assumed it is independent of $y$ as long as $y > \tau$. From the definition of conditional probability:
\begin{align}
    g(\rvx) := p(\rvx | y > \tau, D_n) = \frac{\int_{\tau}^\infty p(\rvx, y | D_n) \diff y}{\int_{\tau}^{\infty} p(y  | D_n) \diff y} \neq p(\rvx | y, D_n),
\end{align}
so \autoref{eq:numer} does not hold immediately. Thus, we have illustrated that the above density ratio may not be proportional to expected improvement. 
\end{proof}

\subsection{Statements Regarding Likelihood-free BO}
\label{app:lfbo-proofs}

\variationalintegral*
\begin{proof}
From the definition of $f^\star$, we have that:
\begin{gather*}
    f^\star(f'(s)) = s f'(s) - f(s) \\
    (f^\star \circ f')'(s) = f'(s) + s f''(s) - f'(s) = s f''(s)
\end{gather*}
The derivative of $\bb{E}_{p(y | \rvx)}[u(y, \tau) f'(s)] - f^\star(f'(s))$ is thus:
\begin{align}
    \bb{E}_{p(y | \rvx)}[u(y, \tau)] f''(s) - s f''(s); \label{eq:derivative}
\end{align}
since $f$ is strictly convex, $f''(s) > 0$, and \autoref{eq:derivative} is zero if and only if $s = \bb{E}_{p(y | \rvx)}[u(y, \tau)]$. Furthermore, the derivative of \autoref{eq:derivative} is 
\begin{align}
    \bb{E}_{p(y | \rvx)}[u(y, \tau)] f'''(s) - s f'''(s) - f''(s),
\end{align}
which is negative when $s = \bb{E}_{p(y | \rvx)}[u(y, \tau)]$. Under this value of $s$, the first order derivative is zero and the second order derivative is negative, so this value of $s$ is a unique maximizer. 
\end{proof}

\mainthm*

\begin{proof}
From \autoref{lemma:variational-integral}, we have that for all $\rvx$ queried in $\gD_n$,
$$
     \left(\argmax_{S: \gX \to \R} \bb{E}_{\gD_n}[u(y; \tau) f'(S(\rvx')) - f^\star(f'(S(\rvx'))]\right)(\rvx) = \bb{E}_{p(y | \rvx)}[u(y; \tau)].
$$
As $n \to \infty$, $\gD_n$ converges to a probability distribution $p(\rvx)$ that is fully supported over $\gX$. Let us denote the limiting distribution of $\gD_n$ as $p(\rvx)$, and define $p(\rvy | \rvx)$ as the conditional distribution from the observation process.
Applying the consistency assumption to $\hat{S}_{\gD_n, \tau}$, we have that $\forall \rvx \in \gX, \tau \in \gT$:
\begin{align}
    \plim_{n \to \infty} L^{\mathrm{LF}(u)}(\rvx, \gD_n, \tau) &:= \plim_{n \to \infty} \hat{S}_{\gD_n, \tau}(\rvx) \nonumber \\
    &:= \plim_{n \to \infty} \left(\argmax_{S: \gX \to \R} \bb{E}_{\gD_n}[u(y; \tau) f'(S(\rvx')) - f^\star(f'(S(\rvx'))]\right)(\rvx) \\&\ = \left(\argmax_{S: \gX \to \R} \bb{E}_{p(y | \rvx')p(\rvx')}[u(y; \tau) f'(S(\rvx')) - f^\star(f'(S(\rvx'))]\right)(\rvx) = \bb{E}_{p(y | \rvx)}[u(y; \tau)].
\end{align}
Applying the consistency assumption to $p(y | \rvx, \gD_n)$, we have that:
\begin{align}
     \plim_{n \to \infty} \gL^{(u)}(\rvx, \gD_n, \tau) = \bb{E}_{p(y|\rvx, \gD_n)}[u(y; \tau)] = \bb{E}_{p(y|\rvx)}[u(y; \tau)],
\end{align}
completing the proof.
\end{proof}

In the following, we show that the same expected utility function value can be achieved by different Gaussian distributions. 
\begin{proposition}[Different Gaussian distributions can have the same expected utility]\label{prop:gauss-acqf}
Let $u(y; \tau)$ be non-negative, continuous and non-decreasing in the $y$ argument for a given $\tau$. For any Gaussian $p_1(y) = \gN(\mu_1, \sigma_1^2)$, there exists a Gaussian $p_2(y ) = \gN(\mu_2, \sigma_2^2)$ such that $p_1 \neq p_2$ and 
$$
\bb{E}_{p_1(y)}[u(y; \tau)] = \bb{E}_{p_2(y)}[u(y; \tau)].
$$
\end{proposition}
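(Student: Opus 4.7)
The idea is a one-parameter family continuity/intermediate-value argument: fix an arbitrary $\sigma_2\neq\sigma_1$ and then slide $\mu_2$ to match the expected utility. Specifically, define
\[
F(\mu,\sigma) \;:=\; \mathbb{E}_{y\sim\mathcal{N}(\mu,\sigma^{2})}[u(y;\tau)] \;=\; \int u(\mu+\sigma z;\tau)\,\phi(z)\,dz,
\]
where $\phi$ is the standard normal density. I would fix $\sigma_2:=2\sigma_1$ (any value $\neq\sigma_1$ works) and then study the single-variable function $g(\mu):=F(\mu,\sigma_2)$, arguing that it takes the value $F(\mu_1,\sigma_1)$ at some $\mu_2\in\mathbb{R}$.

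The steps in order. (i) \emph{Continuity of $g$}: use the representation $g(\mu)=\int u(\mu+\sigma_2 z;\tau)\,\phi(z)\,dz$; monotonicity and non-negativity of $u$ let one apply monotone convergence to any sequence $\mu_n\to\mu$, splitting it into its increasing and decreasing tails, to conclude $g(\mu_n)\to g(\mu)$ (allowing $+\infty$ as a value). (ii) \emph{Monotonicity of $g$}: for $\mu<\mu'$, pointwise in $z$ we have $u(\mu+\sigma_2 z;\tau)\le u(\mu'+\sigma_2 z;\tau)$ since $u(\cdot;\tau)$ is non-decreasing, so $g(\mu)\le g(\mu')$. (iii) \emph{Limits}: monotone convergence gives $\lim_{\mu\to-\infty}g(\mu)=\inf_{y}u(y;\tau)$ and $\lim_{\mu\to+\infty}g(\mu)=\sup_{y}u(y;\tau)$ (the latter possibly $+\infty$). (iv) \emph{Intermediate value}: the target value $v:=F(\mu_1,\sigma_1)$ automatically lies in $[\inf u,\sup u]$, so continuity plus monotonicity of $g$ yields a $\mu_2\in\mathbb{R}$ with $g(\mu_2)=v$, provided $v$ is strictly between the two limits.

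The one obstacle is the boundary case $v\in\{\inf u,\sup u\}$. I would handle it as follows: if $v=\inf u$, then $u(y;\tau)=\inf u$ for $\mathcal{N}(\mu_1,\sigma_1^2)$-a.e.\ $y$; since this Gaussian has full support on $\mathbb{R}$ and $u(\cdot;\tau)$ is continuous, this forces $u(\cdot;\tau)$ to be identically equal to $\inf u$ on $\mathbb{R}$, so $F(\mu,\sigma)=\inf u$ for every $(\mu,\sigma)$, and any $(\mu_2,\sigma_2)\neq(\mu_1,\sigma_1)$ works. The symmetric argument handles $v=\sup u$ (which then forces $\sup u<\infty$ and again $u$ constant a.e.). In every remaining case, $v$ lies strictly between the two limits and the IVT produces the desired $\mu_2$; since $\sigma_2\neq\sigma_1$, the resulting $p_2=\mathcal{N}(\mu_2,\sigma_2^{2})$ is necessarily distinct from $p_1$, completing the proof.

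The main delicate point is the continuity/limit computation when $u$ is unbounded (e.g.\ the EI utility grows linearly), but monotonicity of $u$ in $y$ together with the rapidly decaying Gaussian tails makes the dominated/monotone convergence argument routine; no explicit evaluation of the Gaussian integral is needed.
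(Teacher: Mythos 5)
Your proposal is correct and follows essentially the same route as the paper's proof: fix a variance different from $\sigma_1$, observe that the expected utility as a function of the mean interpolates monotonically between $\inf_y u$ and $\sup_y u$, and apply the intermediate value theorem, with a separate trivial case when $u$ is constant. You are somewhat more careful than the paper about continuity when $u$ is unbounded and about the boundary case $v\in\{\inf u,\sup u\}$ (which the paper folds into its ``$\max=\min$'' case by asserting $U$ lies in the open interval), but the underlying argument is the same.
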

\begin{proof}
Let $U := \bb{E}_{p_1(y)}[u(y; \tau)]$, and choose any fixed $\sigma_2$ smaller than $\sigma_1$. We have that:
\begin{align}
    \lim_{\mu_2 \to \infty} \bb{E}_{p_2(y )}[u(y; \tau)] &= \max_y u(y; \tau) \\
    \lim_{\mu_2 \to -\infty} \bb{E}_{p_2(y )}[u(y; \tau)] &= \min_y u(y; \tau).
\end{align}
We split $u(y; \tau)$ into two cases:
\begin{itemize}
\item If $\max_y u(y; \tau) = \min_y u(y; \tau)$, then the desired condition is trivially satisfied.
\item Otherwise, $U \in (\min_y u(y; \tau), \max_y u(y; \tau))$; since $ \bb{E}_{p_2(y)}[u(y; \tau)]$ is a continous function over $\mu_2$ when $\sigma_2$ is fixed, then from the intermediate value theorem, there must exist some $\mu_2$ such that $U = \bb{E}_{p_2(y)}[u(y; \tau)]$.
\end{itemize}
Thus, there exists a different Gaussian with the same expected utility. 
\end{proof}

\section{Experimental Details}

\subsection{Synthetic Evaluation of \methodshort{} and BORE (Section \ref{sec:exp:theory})}
\label{app:sec:syn}
Our classifier model is a two layer fully-connected neural network with $128$ units at each layer; we repeat all experiments with $5$ random seeds. For each evaluation, we optimize the model with batch gradient descent for 1000 epochs, using an Adam optimizer with learning rate $0.01$ and weight decay $10^{-6}$. We illustrate the learned acquisition functions for one random seed and compare it with the ground truth ones in \autoref{fig:synthetic-equivalence-appendix}. We placed a non-decreasing transformation over the BORE result to make it directly comparable to PI; suppose the density ratio predictor of BORE is $C_{\mathrm{BORE}}$, we apply the following transformation (and implement a numerically stable version of it):
\begin{align}
    C_{\mathrm{BORE}} \mapsto \frac{C_{\mathrm{BORE}}}{C_{\mathrm{BORE}} + \frac{1 - \gamma}{\gamma}},
\end{align}
where $\gamma = p(y > \tau | \gD_n)$. This does not modify the optimum of BORE, and is only applied when we would compare against the ground truth acquisition functions.

\begin{figure}
    \centering
    \includegraphics[width=0.8\textwidth]{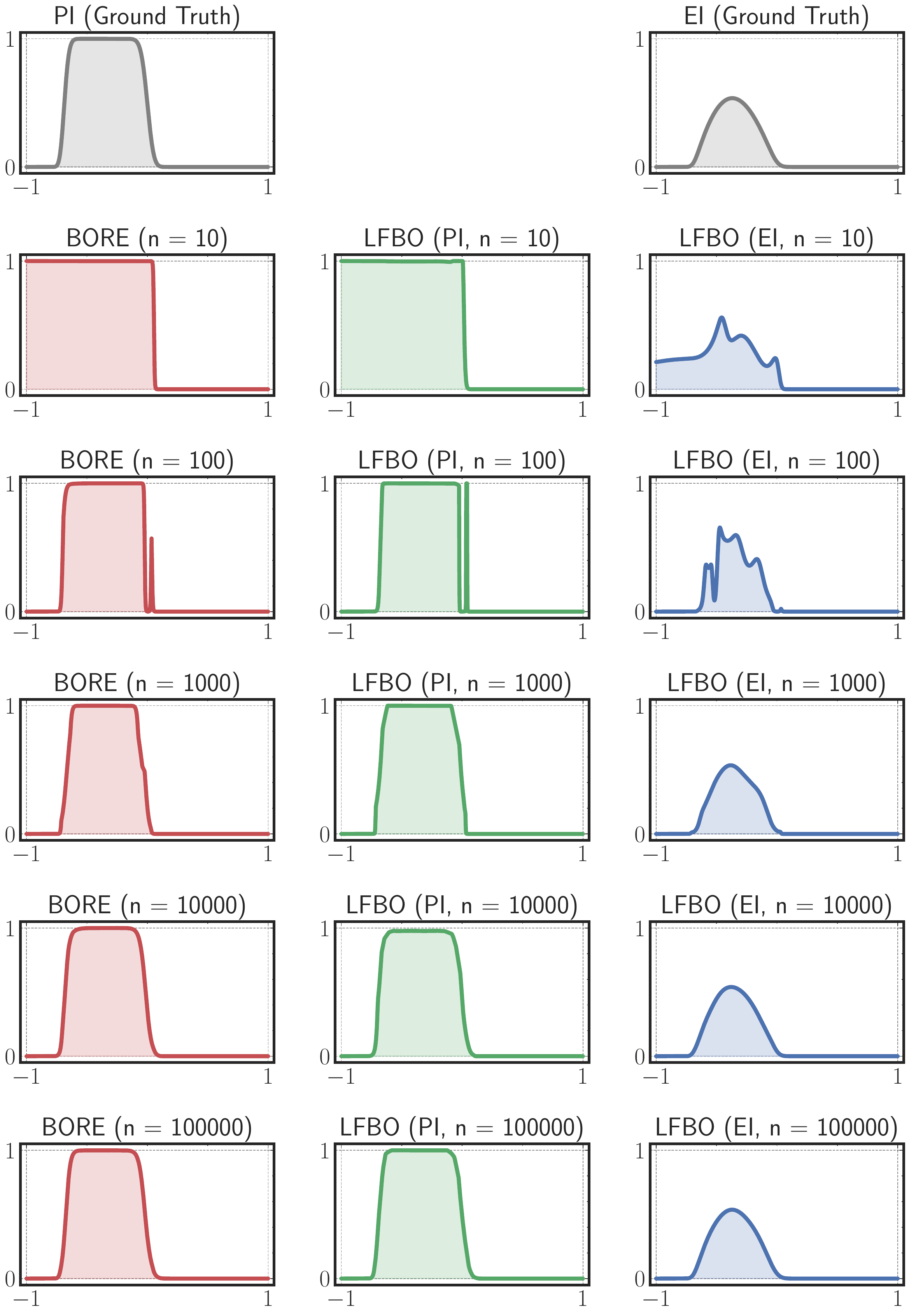}
    \caption{Acquisition functions for the synthetic problem. (\textit{Top row}) ground truth PI and EI values; (\textit{bottom rows}) acquisition functions obtained by BORE (\textit{\textit{left column}}), LFBO (PI, \textit{middle column}), and LFBO (EI, \textit{right column}) with the number of observations $n \in \{10, 10^2, 10^3, 10^4, 10^5, 10^6\}$. The acquisition functions are closer to the ground truth with more observations. }
    \label{fig:synthetic-equivalence-appendix}
\end{figure}

\newpage
\subsection{Hyperparameters for Real-world Benchmarks (Section~\ref{sec:exp:pi-vs-ei})}
\label{app:exp-real}
We provide a description for the hyperparameter search space of the HPOBench \citep{klein2019tabular} and NAS-Bench-201 \citep{dong2020bench} problems in Table~\ref{tab:config-space}.

\begin{table}[htb]
    \centering
    \begin{tabular}{c|c c}
    \toprule
        & Hyperparameter & Choices\\
    \midrule
         \multirow{9}{7em}{HPOBench} & Initial LR & \{0.0005, 0.001, 0.005, 0.01, 0.05, 0.1\} \\
         &  Batch Size & \{8, 16, 32, 64\} \\
         & LR Schedule & \{cosine, fix\} \\
         & Activation of Layer 1 & \{relu, tanh\} \\
         & Activation of Layer 2 & \{relu, tanh\} \\
         & Width of Layer 1 & \{16, 32, 64, 128, 256, 512\} \\
         & Width of Layer 2 & \{16, 32, 64, 128, 256, 512\} \\
         & Dropout rate of Layer 1 & \{0.0, 0.3, 0.6\} \\
         & Dropout rate of Layer 2 & \{0.0, 0.3, 0.6\} \\
    \midrule
    \multirow{6}{7em}{NAS-Bench-201} & Arc-0 Operation & \{zeroize, skip-connect, conv-$1 \times 1$, conv-$3 \times 3$, avg-pool-$3 \times 3$\} \\
    & Arc-1 Operation & \{zeroize, skip-connect, conv-$1 \times 1$, conv-$3 \times 3$, avg-pool-$3 \times 3$\} \\
    & Arc-2 Operation & \{zeroize, skip-connect, conv-$1 \times 1$, conv-$3 \times 3$, avg-pool-$3 \times 3$\} \\
    & Arc-3 Operation & \{zeroize, skip-connect, conv-$1 \times 1$, conv-$3 \times 3$, avg-pool-$3 \times 3$\} \\
    & Arc-4 Operation & \{zeroize, skip-connect, conv-$1 \times 1$, conv-$3 \times 3$, avg-pool-$3 \times 3$\} \\
    & Arc-5 Operation & \{zeroize, skip-connect, conv-$1 \times 1$, conv-$3 \times 3$, avg-pool-$3 \times 3$\} \\
    \bottomrule
    \end{tabular}
    \caption{Configuration spaces for HPOBench and NAS-Bench-201 problems.}
    \label{tab:config-space}
\end{table}

We follow \citep{tiao2021bore} to use Multi-layer perceptions (MLP), Random Forest (RF, \citet{breiman2001random}), and gradient boosted trees (XGBoost, \citet{chen2016xgboost}) to implement the probabilistic classifier. Detailed hyperparameters are summarized below:

\textbf{MLP:} \url{https://keras.io/}
\vspace{-3mm}
\begin{itemize}[noitemsep]
    \item Number of hidden layers: $2$
    \item Number of hidden units: $32$
    \item Activation function: \texttt{ReLU}
    \item Optimizer: \texttt{Adam} (with default parameters \href{https://www.tensorflow.org/api_docs/python/tf/keras/optimizers/Adam}{here})
    \item Batch size: $64$
\end{itemize}

\textbf{Random Forest:} \url{https://scikit-learn.org/}
\vspace{-3mm}
\begin{itemize}[noitemsep]
    \item Number of trees in the forest: $1000$
    \item Minimum number of samples required to split an internal node: $2$
    \item Maximum depth of the tree: \texttt{None}, nodes are expanded  until all leaves contain less than 2 samples.
    \item Minimum number of samples required to be at a leaf node: $1$
    \item More parameter values can be found \href{https://scikit-learn.org/stable/modules/generated/sklearn.ensemble.RandomForestClassifier.html}{here}.
\end{itemize}

\textbf{XGBoost:} \url{https://xgboost.readthedocs.io/}
\vspace{-3mm}
\begin{itemize}[noitemsep]
    \item Number of boosting rounds: $100$
    \item Minimum sum of instance weight (hessian) needed in a child: $1$
    \item Boosting learning rate: $0.3$
    \item More parameter values can be found \href{https://xgboost.readthedocs.io/en/stable/python/python_api.html#xgboost.XGBClassifier}{here}.
\end{itemize}

\subsection{Ablation Study over the Threshold}
\label{app:sec:ablation-gamma}
\begin{figure}[htb]
    \centering
    \includegraphics[width=0.7\textwidth]{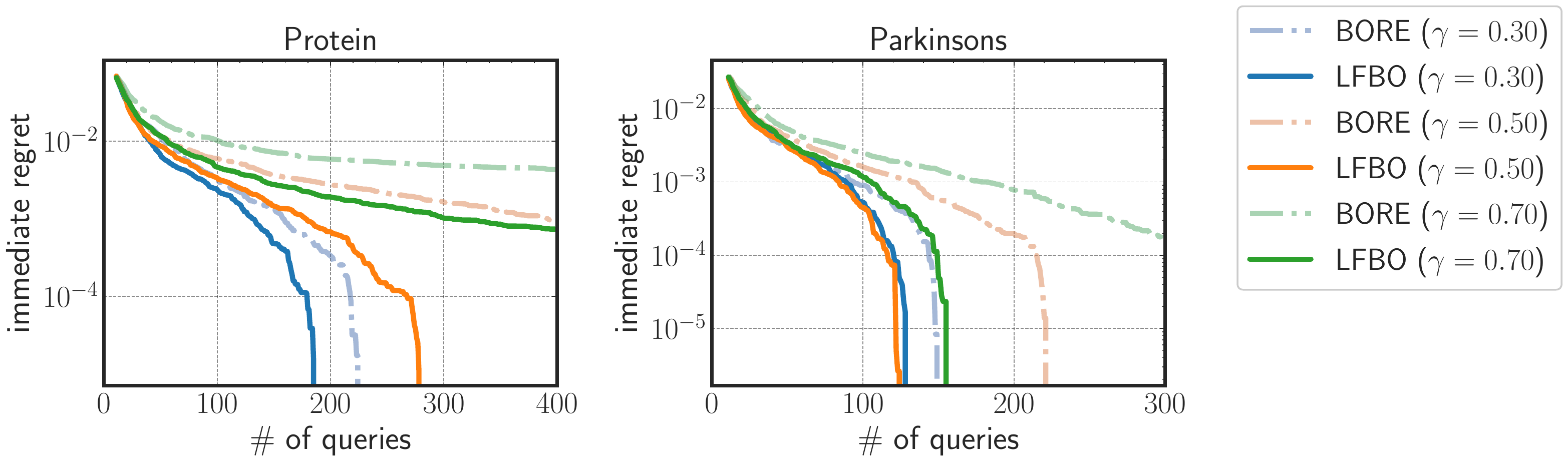}
    \caption{Results on two HPOBench problems with different thresholds. LFBO uses the utility function for EI.%
    }
    \label{fig:gamma}
\end{figure}

Both BORE and LFBO (EI) include a hyperparameter $\gamma$, which includes less positive samples when smaller and may affect optimization performance empirically. We use the XGBoost classifier on two HPOBench problems with different values of $\gamma$ and plot the results in \autoref{fig:gamma}. These results show that while we observe a performance drop with a larger $\gamma$ in both cases, LFBO (EI) is less sensitive to the choice of $\gamma$, which is reasonable since it take into account how much the observed value exceeds the threshold. 

\subsection{\methodshort{} for Composite Functions}
\label{app:sec:composite}

\paragraph{\methodshort{} model and objectives.} Our composite neural network is $C_\theta(\rvx): \gX \to \R$ parametrized by $\theta$: $$C_\theta(\rvx) = \frac{u(-\norm{h_\theta(\rvx) - z_\star}_2^2; \tau)}{u(-\norm{h_\theta(\rvx) - z_\star}_2^2; \tau) + 1},$$ where $h_\theta(\rvx): \gX \to \R^d$ is a neural network with the training parameters $\theta$. Essentially, $C_\theta$ is a neural network that contains a special final layer that incorporates the structure of the objective function (see \autoref{fig:comp-nn}). 
We also considered replacing the final layer with another neural network, but observed worse performance; this is not hard to expect since we would require some data to learn a suitable mapping in this case. 

We consider training $C_\theta(\rvx)$ with our \methodshort{} objective plus a regularization term that encourages $h_\theta(\rvx)$ (the neural network) to produce similar outputs to $h(\rvx)$ (the black-box); for EI, this leads to the following objective function:
\begin{align}
    \frac{|\gD_n^+|}{|\gD_n|} \bb{E}_{\gD_n^{+}}[(-\norm{\rvy - \rvz_\star}_2^2 - \tau) \log C_\theta(\rvx)] + \bb{E}_{\gD_n}[\log (1 - C_\theta(\rvx))] + \bb{E}_{(\rvx, \rvy) \sim \gD_n}[-\norm{h_\theta(\rvx) - \rvy}_2^2]
\end{align}
where $\gD_n = \{(\rvx, \rvy)\}_{i=1}^{n}$ contains observations from the black-box $h(\rvx)$, and $\gD_n^+$ is the subset of $\gD_n$ where $-\norm{\rvy - \rvz_\star}_2^2 > \tau$.  We also tried different weights between the \methodshort{} objective and the regularization term, which gave similar results. 

\begin{figure}
    \centering
    \includegraphics[width=0.8\textwidth]{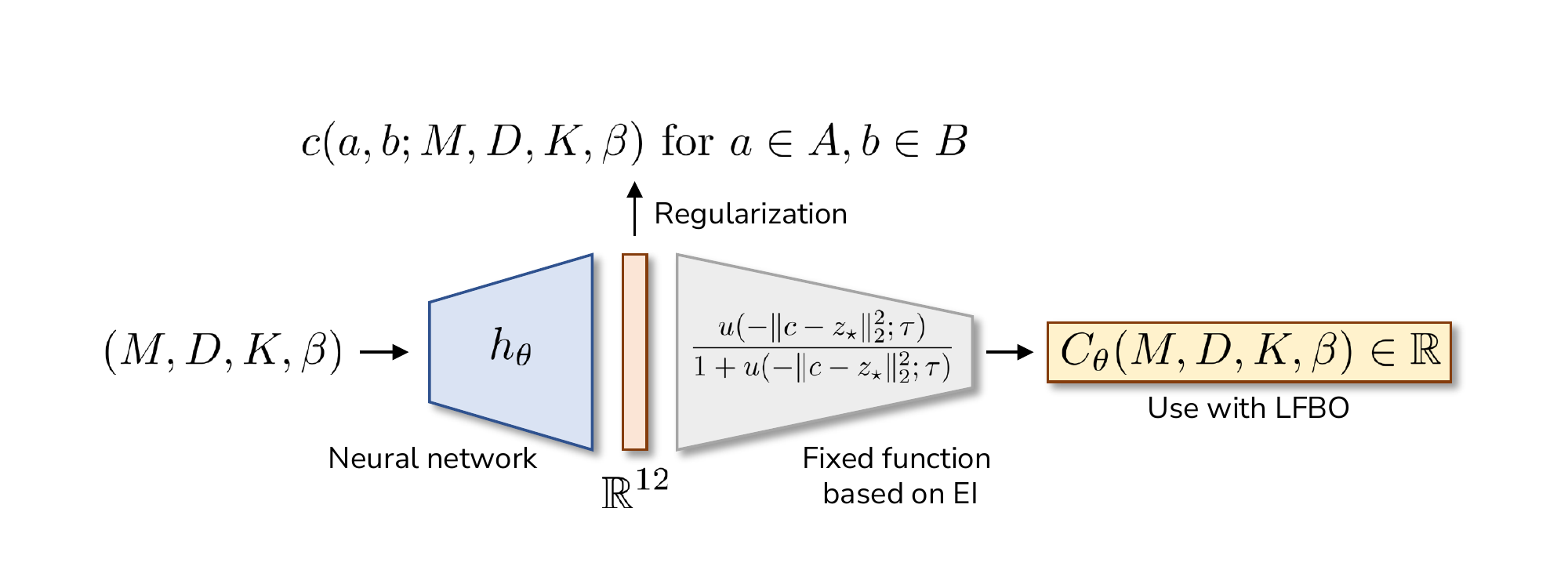}
    \caption{Structure for the classifier model in the composite case. We regularize the intermediate layer such that it is close to the real black-box outputs from $h$.}
    \label{fig:comp-nn}
\end{figure}

\paragraph{Test problem.} We evaluate our methods on the environmental model function~\citep{bliznyuk2008bayesian}, which models the spilling of pollutants at two sites into a long and narrow holding channel in a chemical accident. The model is:
\begin{align}
    c(a, b; M, D, K, \xi) = \frac{M}{\sqrt{4 \pi D b}} \exp\left(\frac{-a^2}{4 D b}\right) + \frac{\bb{I}(b - \xi > 0) M}{\sqrt{4 \pi D (b - b_0)}} \exp\left(\frac{- (a - K)^2}{4 D (b - \xi)}\right)
\end{align}
where $M$ is the mass of pollutant spilled at each location, $D$ is the diffusion rate in the channel, $K$ is the location of the second spill, and $\xi$ is the time of the second spill. The objective function is:
\begin{align}
    \sum_{(a, b) \in A \times B} (c(a, b; M, D, K, \xi) - c(a, b; M_0, D_0, K_0; \xi_0))^2
\end{align}
where $A = \{0, 1, 2.5\}$, $B = \{15, 30, 45, 60\}$ are grid locations, and $(M_0, D_0, K_0, \xi_0)$ are the underlying true values of these parameters. This function is a grey-box function in the sense that $c(a, b; M, D, K, \xi)$ for each $(a, b) \in A \times B$ is a black-box function. 

\paragraph{Implementation and hyperparameters.}
We use our own implementation for GP (EI), \methodshort{} (EI) and composite \methodshort{} (EI), and use the official implementation\footnote{\url{https://github.com/RaulAstudillo06/BOCF/blob/master/test_4b.py}} (with minor modifications for bug fixes) for composite GP (EI). We note that the regrets for composite GP (EI) appear larger than the ones reported in \citep{astudillo2019bayesian} because of the evaluated statistics. \citet{astudillo2019bayesian} use average of log10 regret, whereas we use log10 of average regret; since log10 is concave, our statistics will be larger than theirs (which would favor higher variance across random seeds). 

For all the methods, we first take 10 random samples and then proceed with the Bayesian optimization algorithm; \autoref{fig:comp-regret} plots the immediate regret \textit{after} the initial random samples have been collected.
For composite \methodshort{} (EI), we select $\tau$ to be the $10$-th percentile of the existing observed $g(\rvx)$ values. Our $h_\theta(\rvx)$ is a two layer fully connected neural network with $64$ units in each layer; we find that the results are insensitive to slight changes to the neural network architecture.

\section{Additional Experimental Results}
\subsection{LFBO on Additional Continuous Functions}

\begin{figure}
    \centering
    \includegraphics[width=0.7\textwidth]{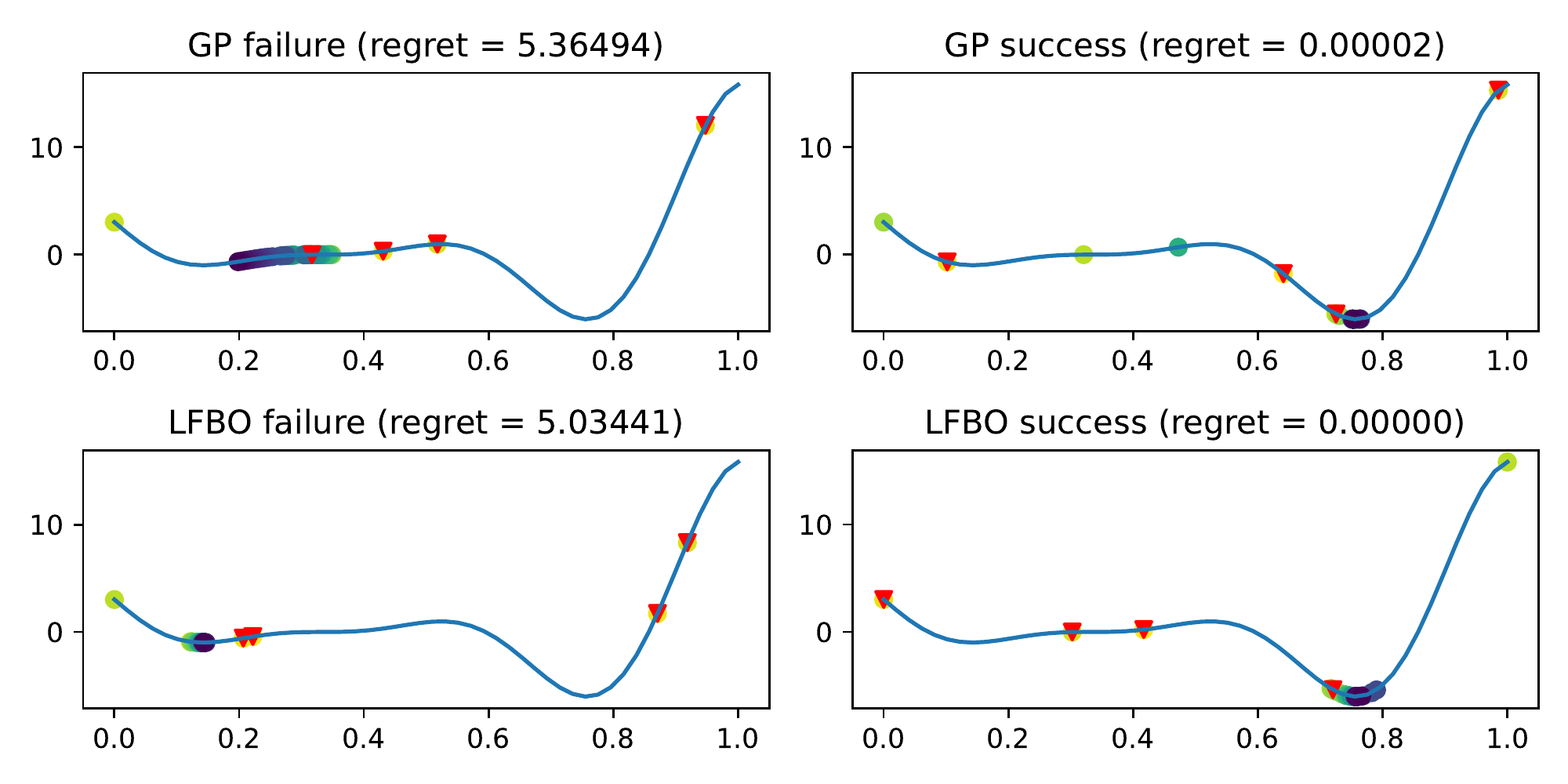}
    \caption{Success and failure cases for LFBO and GP (with EI) on minimizing the Forrester function~\citep{sobester2008engineering}. The red triangle points indicate the initial random observations, and the subsequent observations are denoted by color circles (from bright to dark).
    }
    \label{fig:forrester}
\end{figure}

Apart from the composite function, we further evaluate LFBO on some additional continuous functions, such as the Forrester function (example from in \autoref{fig:forrester}; we observe both successful and unsuccessful optimization results for both LFBO and GP, which depends on the initial random queries. Failures are mostly due to initial samples that causes the model to be overconfident about the global valley between $[0.6, 0.8]$, and even GPs cannot avoid them~\citep{deshpande2021calibration} %
(these occur roughly 25\% of the time for both methods). Improving LFBO on these failure cases is an interesting future work.

\subsection{Runtime Comparisons between LFBO and GP-based Methods}

We perform these comparisons on the same machine. 
On NAS-Bench-201's CIFAR-10 dataset, the average time for LFBO to finish 200 steps of BO is around 110 seconds,
while the average time for GP-EI (implemented with the BoTorch library) to finish 200 steps of BO is around 600 seconds. 
Thus LFBO is $5 \times$ faster than the GP in this setting. We also find that model fitting and acquisition function optimization takes roughly the same time with LFBO, but the latter was much slower with GPs. Although both methods require some training time given new observations, the inference time of LFBO is much faster, 
since basic GP requires $\mathcal{O}(n^3)$ computational complexity to perform posterior inference, while in LFBO the model size (and by extension inference time) is constant.
While sparse GPs can reduce inference time to the order of $\mathcal{O}(n)$, they can still be more expensive than LFBO (as $n$ becomes very large).

On Bliznyuk, composite EI with GP (using the official implementation based on GPyTorch) is much slower (where 40 iterations took around 1200s), around $30\times$ slower than LFBO (which took around 40s). Even with a more efficient implementation using multi-task GPs~\citep{maddox2021bayesian}\footnote{\url{https://botorch.org/tutorials/composite_bo_with_hogp}}, the 40 iterations took around 640s on the same CPU, which was around $15\times$ slower. 
We believe this is probably because the model has to take additional Monte Carlo samples as composite EI is no longer analytically tractable. While this gap could be smaller if further optimizations over the GP implementations are made, we believe that this demonstrates the efficiency of LFBO methods even with additional training.

\end{document}